\relax
\documentclass[letterpaper]{article} 
\usepackage{aaai19}  
\usepackage{times}  
\usepackage{helvet}  
\usepackage{courier}  
\PassOptionsToPackage{hyphens}{url}\usepackage{hyperref}
\usepackage{graphicx}  
\frenchspacing  
\setlength{\pdfpagewidth}{8.5in}  
\setlength{\pdfpageheight}{11in}  

\usepackage{amsthm}
\usepackage[utf8]{inputenc} 
\usepackage[T1]{fontenc}    
\usepackage{hyperref}
\usepackage{booktabs}       
\usepackage{amsfonts}       
\usepackage{nicefrac}       
\usepackage{microtype}      
\usepackage{thm-restate}
\usepackage{algpseudocode,algorithm,algorithmicx}
\usepackage[cal=zapfc,bb=fourier]{mathalfa}
\usepackage{amsmath,amssymb}
\usepackage{rotating}
\usepackage{times}
\usepackage{placeins}
\usepackage{enumitem}   
\usepackage{nicefrac}       
\usepackage{microtype}      
\usepackage[normalem]{ulem} 
\usepackage{adjustbox}
\usepackage{tabulary}

\usepackage{tikz,tikz-cd}
\usetikzlibrary{matrix, calc, arrows}
\usepackage{tabulary,booktabs}
\usetikzlibrary{tikzmark}
\usepackage{stackengine}
\setlength\parindent{0pt}

\newtheorem{definition}{Definition}
\newtheorem{theorem}{Theorem}

\newtheorem{lemma}{Lemma}

\DeclareMathOperator*{\argmax}{arg\,max}

\def\one{\mbox{1\hspace{-4.25pt}\fontsize{12}{14.4}\selectfont\textrm{1}}}

\usepackage{tikz}
\usepackage{tikz-cd}
\usetikzlibrary{matrix,calc,arrows}
\usetikzlibrary{tikzmark}

\tikzstyle{b} = [rectangle, draw, fill=blue!20, node distance=3cm, text width=6em, text centered, rounded corners, minimum height=4em, thick]
\tikzstyle{c} = [rectangle, draw, inner sep=0.5cm, dashed]
\tikzstyle{l} = [draw, -latex',thick]


\newcommand{\indep}{\rotatebox[origin=c]{90}{$\models$}}

  \pdfinfo{
/Title (A Formal Approach to Explainability)
/Author (Anonymous authors)}
\setcounter{secnumdepth}{0}  
 \begin{document}
%
\title{A Formal Approach to Explainability}
\author{Lior Wolf$^{1,2}$~~~~~~~~~~Tomer Galanti$^{2}$~~~~~~~~~~Tamir Hazan$^{3}$\\
~\\
$^1$Facebook AI Research\\
$^2$The School of Computer Science, Tel Aviv University\\
$^3$Technion\\
}
\maketitle
\begin{abstract}
We regard explanations as a blending of the input sample and the model's output and offer a few definitions that capture various desired properties of the function that generates these explanations. We study the links between these properties and between explanation-generating functions and intermediate representations of learned models and are able to show, for example, that if the activations of a given layer are consistent with an explanation, then so do all other subsequent layers. In addition, we study the intersection and union of explanations as a way to construct new explanations.
\end{abstract}

\section{Introduction}

Machine learning is often concerned with tacit knowledge, and tacit knowledge leads to black box models. Given a learned model, one cannot ``crack it open'' in the hope to understand all of the internal nuts and bolts. Explaining the model often relies, instead, on communicating, in a way that is understandable to humans, an internal state of the model during computation. 

An explanation process, therefore, has three components: the input, the model's output for that input, which needs to be justified, and an internal state of the model. The explanation itself combines the input and the output into a joint sample that should be understandable by human users. The explaining function (EF) generates these explanations, based on the two inputs, and is intimately tied to the model it explains. We can expect, therefore, that the generated explanations are linked to internal states of the model.

For example, consider a mapping from images to labels of objects. The explanation often takes the visual form of an image, where the predicted object is highlighted and the features related to the label are emphasized, see, e.g.,~\cite{zeiler2014visualizing}. The algorithmic way to explain, is to generate this hybrid image from the internal representation of the black-box model. 
Another form of explanation is a textual one~\cite{hendricks2016generating}, and describes features that belong to the recognized class. For example, ``[this is an image of a broccoli since] it is green, has a flowering head, and a thick stem with small leaves'', where the part in brackets is the label, but not the explanation. This explanation is both a function of the input image (describes what can be seen and where) and the label (contains known properties of broccolis).

We provide a formal framework that captures various desiderata of explanations, among which are: consistency between an internal model's state and the generated explanation, explainability of an internal state, validity of an explanation, and its completeness.

Our main results link various aspects of the properties. For example, a valid explanation has to be complete. We also study the specific case of explaining, using the gradient of the loss, the predictions of  multiclass neural networks and show that the explanation is linked to the learned representation. Lastly, we study the intersection and unions of explanations, as a way to create new explanations by combining existing ones.

\section{Settings}

We describe a few fundamental concepts in a way that is less formal than what is presented in the subsequent sections. An illustration of the main components of our framework is given in Fig.~\ref{fig:illustration}.

\noindent{\bf What do we want to explain?} Given a function $h:\mathcal{X}\rightarrow \mathcal{Y}$ from the input domain $\mathcal X$ to the output domain $\mathcal{Y}$, we would like to explain the output $h(x)$ for some input $x\in \mathcal{X}$. $h$ is typically a learned model.

\noindent{\bf What is an explanation?} An explanation is a blending of the input and the output. An explanation function (EF for short) $g:\mathcal{X} \times \mathcal {Y}\to G$  maps $x\in \mathcal{X}$ and $y\in \mathcal{Y}$ to $g(x,y)$, which is the explanation for $(x,y)$ in the blended domain $G$.  Hopefully, the elements of domain $G$ are understandable to humans. However, this part is not amendable to formalization.

\noindent{\bf Consistent representation:} Given a function $h:\mathbb{R}^n \to \mathcal{Y}$ of the form $h = c \circ f$, where $f$ is some representation of the input and $c$ a classifier on top of it, we would like to discuss the link between $f$ and an EF $g$. We say that $f$ is consistent with respect to an EF $g$, if for all $x_1,x_2 \in \mathcal{X}$, such that: $\vert g(x_1,h(x_1)) - g(x_2,h(x_2))\vert \leq \epsilon$, we have: $\vert f(x_1) - f(x_2)\vert \leq \beta(\epsilon)$.

\noindent{\bf Explainable representation:} This definition is similar to consistency, with a reversed implication. We say that $f$ is explainable with respect to the EF $g$ if for all $x_1,x_2 \in \mathcal{X}$, such that: $\vert f(x_1) - f(x_2)\vert \leq \epsilon$, we have: $\vert g(x_1,h(x_1)) - g(x_2,h(x_2))\vert \leq \gamma(\epsilon)$.

\noindent{\bf Equivalence between an EF and a representation:} A representation $f$ is equivalent to an EF $g$, if it is both consistent with it and explainable by it.

\noindent{\bf Valid explanation:} An EF $g$ is valid, if there exists a function $t$, such that the model's label is predictable from the explanation $t(g(x,h(x)))\approx h(x)$. 

\noindent{\bf Complete explanation:} We say that an EF $g$ is complete in the context of a model $h$, if there is no information left in the input $x$ that is relevant to $h$, which is independent of the information in $g(x,h(x))$. If we define as $\bar g(x,h(x))$ all the information that is the part of $x$ but which has no information on $g(x,h(x))$, then $g$ is complete if there is no function $s$ such that $s(\bar g(x,h(x)))\approx h(x)$.

\noindent{\bf Intersection  and Union of EFs:} Given a model $h:\mathbb{R}^n \to \mathcal{Y}$ and two EFs $g_1,g_2$, the intersection between them is a representation $u(x,h(x))$ such that we can write $r_1(g_1(x,h(x))) = (e_1(x,h(x)),u(x,h(x)))$ and $r_2(g_2(x,h(x))) = (e_2(x,h(x)),u(x,h(x)))$, where $r_1,r_2$ are invertible transformations and $e_1$ is the part of $g_1$ that is independent of $g_2$ (and vice versa for $e_2$). The union between them is defined as $(e_1(x,h(x)),u(x,h(x)),e_2(x,h(x)))$.


\section{A Formal Model}\label{sec:expl}

In this section, we present our formal model of explainability. The sample space $\mathcal{Z} := \mathcal{X} \times \mathcal{Y}$, where $\mathcal{X} \subset \mathbb{R}^n$ is the inputs space and $\mathcal{Y}$ is the outputs space. For instance, in binary classification, $\mathcal{Y} = \{\pm 1\}$, in multi-class classification $\mathcal{Y} = \{1,\dots,K\} := [K]$ for some $K \in \mathbb{N}$, and in regression, $\mathcal{Y} = \mathbb{R}$. In addition, there is an unknown target function $y:\mathbb{R}^n \to \mathcal{Y}$ that is being learned and a hypothesis class $\mathcal{H}$ of models $h:\mathbb{R}^n \to \mathcal{Y}$ from which the learning algorithm selects an approximation of the target function $y$. We denote by $D$ the distribution of data samples in $\mathcal{X}$.

We consider a family of EFs $\mathcal{G}$, and each EF $g \in \mathcal{G}$ is a mapping $g:\mathbb{R}^n \times \mathcal{Y} \to G$. Here, $G$ is a set of possible explanations. We do not aim to show how to compute an explanation $g(x,h(x))$.  Instead, we focus on providing useful terminology to understand the properties of EFs. 


\begin{figure}
\begin{tikzpicture}[auto,scale=1.1]
    \node (empty) {};

    \node (in) at ([shift={(-3,0)}] empty)   {$x \sim D$};
    \node (y) at ([shift={(3.5,1.3)}] empty) {$y(x)$};
    \node (f) at ([shift={(0.5,0)}] empty)   {$f(x)$};
    \node (g) at ([shift={(1.8,-1.5)}] empty) {$g(x,h(x))$};
    \node (h) at ([shift={(3.5,0)}] empty) {$h(x)$};


    \draw[->] (f) -- (h) node[midway] {$c$};


    \draw[->] (in) -- (f) node[midway] {$f$};
    
    \path [l] (h.south) -- ++(0,-1.2)  -- node[pos=-0.5] {} (g.east) ;
    
    \path [l] (in.north) -- ++(0,1.10)  -- node[pos=0.5] {$y$} (y.west) ;

    \path [l] (in.south) -- ++(0,-1.26)  -- node[pos=0.5] {} (g.west) ;
    

\end{tikzpicture}

\caption{The main components of our framework. The EF $g$ is a function of input $x$ and the models' label $h(x)$, which approximates the target function $y$. $h$ is a composition of some representation $f$ and a classifier $c$. Note that $g$ should generate explanations for a specific $h$ and is not generic.}
\label{fig:illustration}
\end{figure}
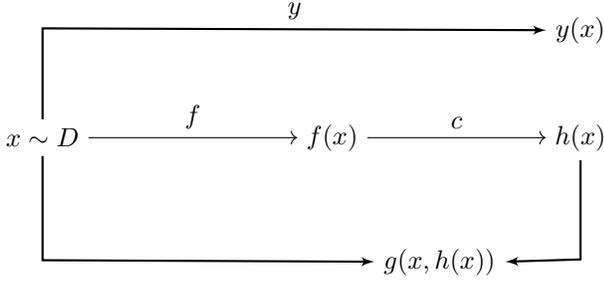

\paragraph{Terminology and notations} 

Before we present our main results, we recall a few technical notations. First, throughout this manuscript, we will assume that $D$ is supported by $\mathcal{X}$, which is assumed, for the purpose of simplifying entropy and mutual-information based arguments, to be a discrete set. We also assume that all logarithms are base $2$. The image of a function $f:\mathcal{X}_1 \to \mathcal{X}_2$ is denoted by $f(\mathcal{X}_1)$. We denote by, $\ell:\mathcal{Z} \to \mathbb{R}$ a loss function. Typically, in binary classification, we have the zero-one loss, $\ell(y_1,y_2) = \one[y_1 \neq y_2]$ and in regression, we often employ the L1 loss $\vert y_1 - y_2\vert_1$ or the L2 loss $\vert y_1 - y_2\vert^2$. Here, $\one[b]$ is an indicator of a boolean variable, $b$, being true, i.e., $\one[\textnormal{true}] = 1$ and $\one[\textnormal{false}] = 0$.

We recall the classical information theoretic notations from~\cite{Cover:2006:EIT:1146355}: the expectation and probability operators symbols $\mathbb{E}, \mathbb{P}$, the Shannon entropy (discrete or continuous) $H(X) := -\mathbb{E}_{X}[ \log \mathbb{P}[X]]$, the conditional entropy $H(X|Y) := H(X,Y) - H(Y)$ and the (conditional) mutual information (discrete or continuous) $I(X; Y| Z) := H(X|Z) - H(X|Y,Z)$. For a given value $p \in [0,1]$, we denote, $H(p) = -p\log(p)-(1-p)\log(1-p)$.

\subsection{Properties of EFs}

We provide formal definitions to the various properties mentioned in the Settings Section. A representation of the input is a function $f:\mathcal{X} \to \mathbb{R}^d$ (for some $d>0$). In most cases, we will assume that $f$ is a sub-architecture of our mapping $h:\mathbb{R}^n \to \mathcal{Y}$. Specifically, we would consider $h$ to be a composite function that is built in layers $h = p_k \circ \dots \circ  p_1$, where each layer $p_i$ is a function $p_i:\mathbb{R}^{n^{i-1}} \to \mathbb{R}^{n^{i}}$ (for some $k,n^i\in \mathbb{N}$, $n^{0}$ being the input dimension $n$ and $i \in \{1,2\dots,k\}$). In this case, $f$ would contain the first $m$ layers $f=p_m \circ \dots \circ p_1$ and $c$ would contain the $k-m$ top layers: $c=p_{k} \circ \dots \circ p_{m+1}$.

\begin{definition}[Consistent Representation] Let $h = c \circ f \in \mathcal{H}$ be a model, $g:\mathcal{Z} \to G$ an EF and $\beta:(0,\infty) \to [0,\infty)$. We say that $f$ is a $\beta(\epsilon)$-consistent representation with respect to $g$, if for any $\epsilon \in (0,\infty)$ and $x_1,x_2 \in \mathcal{X}$, we have:
\begin{equation}
\begin{aligned}
&\vert g(x_1,h(x_1)) - g(x_2,h(x_2)) \vert \leq \epsilon \\
\implies& \vert f(x_1) - f(x_2) \vert \leq \beta(\epsilon)  
\end{aligned}
\end{equation}
\end{definition}

\begin{definition}[Explainable Representation] Let $h = c \circ f \in \mathcal{H}$ be a model and $g:\mathcal{Z} \to G$ an EF. For a given function $\gamma: (0,\infty) \to (0,\infty)$, we say that $f$ is a $\gamma(\epsilon)$-explainable representation with respect to $g$, if for any $\epsilon \in (0,\infty)$ and $x_1,x_2 \in \mathcal{X}$, we have:
\begin{equation}
\begin{aligned}
& \vert f(x_1) - f(x_2) \vert \leq \epsilon \\
\implies& \vert g(x_1,h(x_1)) - g(x_2,h(x_2)) \vert \leq \gamma(\epsilon)  
\end{aligned}
\end{equation}
Additionally, for a given function $\gamma: (0,\infty)\times (0,\infty) \to (0,\infty)$, we say that $f$ is second-order $\gamma(\epsilon_0,\epsilon_1)$-explainable with respect to $g$, if for any $\epsilon_0,\epsilon_1 \in (0,\infty)$ and $x_1,x_2 \in \mathcal{X}$, we have:
\begin{equation}
\begin{aligned}
& \vert f(x_1) - f(x_2) \vert \leq \epsilon_0 \textnormal{ and } \Big\vert \frac{\partial f(x_1)}{\partial x_1} - \frac{\partial f(x_2)}{\partial x_2} \Big\vert \leq \epsilon_1\\
\implies& \vert g(x_1,h(x_1)) - g(x_2,h(x_2)) \vert \leq \gamma(\epsilon_0,\epsilon_1)  
\end{aligned}
\end{equation}
\end{definition}

\begin{definition}[Equivalence between a Representation and an EF] Let $h = c \circ f \in \mathcal{H}$ be a model, $g:\mathcal{Z} \to G$ an EF and $\beta,\gamma:(0,\infty) \to [0,\infty)$. We say that $f$ is $(\beta(\epsilon),\gamma(\epsilon))$-equivalent to $g$, if it is $\beta(\epsilon)$-consistent and $\gamma(\epsilon)$-explainable with respect to $g$.
\end{definition}

\begin{definition}[Valid EF] Let $h \in \mathcal{H}$ be a model, $g:\mathcal{Z} \to G$ an EF, $\epsilon_0>0$ a fixed constant and $x\sim D$. We say that $g$ is $\epsilon_0$-valid with respect to $h$, if there is a function $t:G \to \mathcal{Y}$ that satisfies: 
\begin{equation}\label{eq:valid}
\mathbb{E}_{x}[\ell(t(g(x,h(x))),h(x))] \leq \epsilon_0
\end{equation}
\end{definition}

\begin{definition}[Complete EF] Let $h \in \mathcal{H}$ be a model, $g:\mathcal{Z} \to G$ an EF and $x\sim D$. Let $\alpha,\epsilon>0$ be two constants. We say that $g$ is $(\epsilon,\alpha)$-complete with respect to $h$, if every function $\bar g:\mathcal{X} \to \mathbb{R}^d$, such that, $I(g(x,h(x));\bar{g}(x)) \leq \epsilon$ and function $s:\mathbb{R}^d \to \mathcal{Y}$, we have:
\begin{equation}
\mathbb{E}_{x}[\ell(s(\bar g(x)),h(x))] \geq \alpha
\end{equation}
\end{definition}

\section{Linking Representations and EFs}

The following theorem states that if an internal representation of a layered model $h$ is $\beta(\epsilon)$-consistent with an EF, then, under mild conditions, downstream layers are also consistent with the specified EF.

\begin{theorem}\label{thm:consistent} Let $h = p_k \circ \dots \circ p_1:\mathbb{R}^n \to \mathcal{Y}$ be a model and $g:\mathcal{Z} \to G$ an EF. Assume that $f_i := p_i \circ \dots \circ p_1$ is $\beta(\epsilon)$-consistent with respect to $g$, for some $i \in \{1,\dots,k\}$. Assume that $p_r$ is a $l_r$-Lipschitz function for every $r \in \{i+1,\dots,j\}$. Then, $f_j := p_j \circ \dots \circ p_1$ is $\hat{\beta}(\epsilon)$-consistent with respect to $g$, for $\hat{\beta}(\epsilon) := \beta(\epsilon) \cdot \prod^{j}_{r=i+1} l_r$. 
\end{theorem}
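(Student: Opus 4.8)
The plan is to exploit the compositional structure $f_j = (p_j \circ \cdots \circ p_{i+1}) \circ f_i$ and push the consistency bound for $f_i$ through the intermediate Lipschitz layers. First I would dispose of the trivial case $j = i$, where the product $\prod_{r=i+1}^{j} l_r$ is empty and equals $1$, so that $\hat\beta = \beta$ and the claim reduces to the hypothesis. For $j > i$, I would fix $\epsilon \in (0,\infty)$ and arbitrary $x_1, x_2 \in \mathcal{X}$ satisfying the antecedent $\vert g(x_1,h(x_1)) - g(x_2,h(x_2))\vert \leq \epsilon$, and aim to derive the corresponding bound on $\vert f_j(x_1) - f_j(x_2)\vert$.

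The key intermediate step is to invoke the $\beta(\epsilon)$-consistency of $f_i$ directly: the antecedent above is verbatim the hypothesis in the definition of consistency for $f_i$, so we immediately obtain $\vert f_i(x_1) - f_i(x_2)\vert \leq \beta(\epsilon)$. This is the only place where the consistency assumption enters. It is worth noting that the $h(x_1), h(x_2)$ appearing inside $g$ are identical for $f_i$ and $f_j$ because $h$ is a single fixed model, so the antecedent transfers without modification between the two consistency statements.

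Next I would establish that the map $q := p_j \circ \cdots \circ p_{i+1}$ is $L$-Lipschitz with $L = \prod_{r=i+1}^j l_r$. This follows by a short induction on the number of composed layers, using that the composition of an $a$-Lipschitz map with a $b$-Lipschitz map is $ab$-Lipschitz, so the constants multiply. Applying $q$ to the bound from the previous step and using $f_j = q \circ f_i$ then yields
\[
\vert f_j(x_1) - f_j(x_2)\vert = \vert q(f_i(x_1)) - q(f_i(x_2))\vert \leq L\,\vert f_i(x_1) - f_i(x_2)\vert \leq L\,\beta(\epsilon) = \hat\beta(\epsilon),
\]
which is exactly the consequent required for $\hat\beta(\epsilon)$-consistency of $f_j$.

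There is no substantive obstacle here: the argument chains one implication (consistency of $f_i$) with one contraction inequality (Lipschitzness of the upper layers). The only points that demand care are bookkeeping ones — confirming the empty-product convention in the base case $j = i$ and verifying that the Lipschitz constants compose in the correct multiplicative order — rather than any genuine difficulty.
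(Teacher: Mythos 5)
Your proof is correct and follows essentially the same route as the paper's: apply the $\beta(\epsilon)$-consistency of $f_i$ to the antecedent, then propagate the bound through $p_j \circ \cdots \circ p_{i+1}$ using the product of Lipschitz constants. The extra remarks on the empty-product base case $j=i$ and the induction for composing Lipschitz maps are harmless elaborations of steps the paper leaves implicit.
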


\begin{proof} Assume that $f_i$ is $\beta(\epsilon)$-consistent for some $i \in \{1,\dots,k\}$. Let $x_1,x_2 \in \mathcal{X}$ be two inputs, such that, $\vert g(x_1,h(x_1)) - g(x_2,h(x_2)) \vert \leq \epsilon$. Then, for every $j \in \{i,\dots,k\}$, we have: 
\begin{equation}
\begin{aligned}
&\vert f_j(x_1) - f_j(x_2)\vert \\
=& \vert p_j\circ \dots \circ p_{i+1} \circ f_i(x_1) - p_j\circ \dots \circ p_{i+1} \circ f_i(x_2)\vert \\
\leq& \prod^{j}_{r=i+1} l_{r} \vert f_i(x_1) - f_i(x_2)\vert \leq \beta(\epsilon) \cdot \prod^{j}_{r=i+1} l_{r} = \hat{\beta}(\epsilon)
\end{aligned}
\end{equation}
Since each $p_r$ is a $l_r$-Lipschitz continuous function for every $r \in \{i+1,\dots,j\}$.
\end{proof}

One implication of this result is that if a layer of a neural network model $h$ is consistent with an explanation $g$, then $h$ itself is also consistent, i.e., in the case where any of the layers of $h$ is consistent with $g$, then if $g(x,h(x))$, which is a function of $h(x)$ as well as of $x$, does not change much when replacing $x$ with $x'$, then $h(x)$ and $h(x')$ are similar.

The following theorem deals with upstream layers: under mild assumptions, if $f$ is an explainable representation, that is obtained as a layer of a neural network model $h$, then so are the previous layers in this network.

\begin{theorem}\label{thm:explainable} Let $h = p_k \circ \dots \circ p_1:\mathbb{R}^n \to \mathcal{Y}$ be a model and $g:\mathcal{Z} \to G$ an EF. Assume that $f_i := p_i \circ \dots \circ p_1$ is $\gamma(\epsilon)$-explainable with respect to $g$, for some $i \in \{1,\dots,k\}$. Assume that $p_r$ is a $l_r$-Lipschitz function for every $r \in \{j+1,\dots,i\}$. Then, $f_j := p_j \circ \dots \circ p_1$ is $\hat{\gamma}(\epsilon)$-explainable with respect to $g$, for $\hat{\gamma}(\epsilon) := \gamma\left(\epsilon \cdot \prod^{i}_{r=j+1} l_r \right)$. 
\end{theorem}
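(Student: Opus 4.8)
The plan is to mirror the argument of Theorem~\ref{thm:consistent}, but to propagate the bound in the opposite direction along the network. The key structural observation is that, since $j < i$, the later representation factors through the earlier one: $f_i = p_i \circ \dots \circ p_{j+1} \circ f_j$. This lets me convert a closeness assumption on $f_j$ into a closeness statement on $f_i$, and then invoke the explainability hypothesis that is already available at layer $i$.

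Concretely, I would fix $x_1, x_2 \in \mathcal{X}$ with $\vert f_j(x_1) - f_j(x_2)\vert \leq \epsilon$ and aim to bound $\vert g(x_1,h(x_1)) - g(x_2,h(x_2))\vert$. First I would apply the composition of the Lipschitz maps $p_{j+1}, \dots, p_i$ to the points $f_j(x_1)$ and $f_j(x_2)$: since each $p_r$ is $l_r$-Lipschitz, their composition is $\prod^{i}_{r=j+1} l_r$-Lipschitz, which yields
\[
\vert f_i(x_1) - f_i(x_2)\vert \leq \Big(\prod^{i}_{r=j+1} l_r\Big) \cdot \vert f_j(x_1) - f_j(x_2)\vert \leq \epsilon \cdot \prod^{i}_{r=j+1} l_r =: \epsilon'.
\]

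With this in hand, I would invoke the $\gamma(\epsilon)$-explainability of $f_i$. Crucially, that definition quantifies over \emph{all} $\epsilon \in (0,\infty)$, so I am free to instantiate it at the scaled radius $\epsilon'$ rather than at $\epsilon$; no monotonicity of $\gamma$ is required. The hypothesis $\vert f_i(x_1)-f_i(x_2)\vert \leq \epsilon'$ then gives $\vert g(x_1,h(x_1)) - g(x_2,h(x_2))\vert \leq \gamma(\epsilon') = \gamma\big(\epsilon \cdot \prod^{i}_{r=j+1} l_r\big) = \hat{\gamma}(\epsilon)$, which is exactly the claim.

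I expect no genuine obstacle here; the argument is the dual of Theorem~\ref{thm:consistent} and is essentially bookkeeping. The one point that needs care --- and the reason the Lipschitz factor appears \emph{inside} $\hat{\gamma}$ rather than as an outer multiplier as in the consistency case --- is the direction of the implication: in the explainable setting the assumption is on the representation and the conclusion is on the explanation, so the amplification of the radius by $\prod^{i}_{r=j+1} l_r$ must be absorbed into the argument of $\gamma$ before the explainability hypothesis is applied.
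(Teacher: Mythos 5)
Your proposal is correct and follows essentially the same route as the paper's own proof: compose the Lipschitz bounds to transfer closeness from $f_j$ to $f_i$, then instantiate the explainability of $f_i$ at the scaled radius $\epsilon \cdot \prod^{i}_{r=j+1} l_r$. Your remark that the definition quantifies over all $\epsilon$, so no monotonicity of $\gamma$ is needed, is a point the paper leaves implicit but is the right justification.
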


\begin{proof} Assume that $f_i$ is $\gamma(\epsilon)$-explainable for some $i \in  \{1,\dots,k\}$. Let $x_1,x_2 \in \mathcal{X}$ be two inputs, such that, $\vert f_j(x_1) - f_j(x_2) \vert \leq \epsilon$. Then,
\begin{equation}
\begin{aligned}
&\vert f_i(x_1) - f_i(x_2)\vert \\
=& \vert p_i\circ \dots \circ p_{j+1} \circ f_j(x_1) - p_i\circ \dots \circ p_{j+1} \circ f_j(x_2)\vert \\
\leq& \prod^{i}_{r=j+1} l_{r} \vert f_j(x_1) - f_j(x_2)\vert \leq \epsilon \cdot \prod^{i}_{r=j+1} l_{r}
\end{aligned}
\end{equation}
Since each $p_r$ is a $l_r$-Lipschitz continuous function for every $r \in \{j+1,\dots,i\}$. Therefore, since $f_i$ is $\gamma(\epsilon)$-explainable with respect to $g$, we have:
\begin{equation}
\begin{aligned}
\vert g(x_1,h(x_1)) - g(x_2,h(x_2))\vert \leq \gamma\left(\epsilon \cdot \prod^{i}_{r=j+1} l_{r} \right)
\end{aligned}
\end{equation}
\end{proof}

Note that an immediate implication is that if a representation is explainable by $g$, then so is the input $x$ itself.

\subsection{A Specific Case Study}

We next treat a specific case, which is the conventional multiclass classification approach for deep neural networks,  coupled with the iconic image-based explanation that is given by the derivative of the output neuron associated with the predicted label by the input. In this case, the model predicts the label based on an $\argmax$ of multiple 1D linear projections ($m_i$, $i$ being the index of the label) of the activations of the penultimate layer $p(x)$ for some input $x$. The explanation of the prediction $h(x)$ is then given as the matrix derivative of $(m^\top_{h(x)} \cdot p(x))$ by the input $x$.

The following theorem states that if our model is of the form $h(x) = \argmax_{i\in \mathcal{Y}} (m^\top_{i} \cdot p(x))$ and our EF has the form $g(x,h(x)) = \frac{\partial (m^\top_{h(x)} \cdot p(x))}{\partial x}$, where $p = c \circ f$ such that $c$, $f$ and the derivative of $c$ are Lipschitz continuous functions, then, $f$ is explainable with respect to $g$.

\begin{theorem}\label{thm:dervG} Let $\mathcal{Y} = [K]$ and $h:\mathbb{R}^n \to \mathcal{Y}$ a model of the form, $h(x) = \argmax_{i\in \mathcal{Y}} m^\top_i \cdot p(x)$, where $p:\mathbb{R}^n \to \mathbb{R}^d$ and $m_i \in \mathbb{R}^d$, for $i \in [K]$. Let $g(x,h(x)) = \frac{\partial (m^\top_{h(x)} \cdot p(x))}{\partial x}$ be an EF. Assume that for all $i \in [K]$, $p = c \circ f$, such that: $c$, $\frac{\partial c(x)}{\partial x}$, $\frac{\partial p(x)}{\partial x}$ and $f$ are Lipschitz continuous functions. Additionally, assume that: $\forall i\neq j \in [K], x \in \mathcal{X}: m^{\top}_i \neq m^{\top}_j $ and $\forall x \in \mathcal{X}: \vert p(x)\vert \geq \Delta$, for some constant $\Delta >0$. Then, $f$ is second-order $\mathcal{O}(\epsilon_0+\epsilon_1)$-explainable with respect to $g$.
\end{theorem}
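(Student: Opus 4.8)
The plan is to unfold $g$ explicitly. Since the $\argmax$ makes $h(x)$ locally a fixed label, I would write $g(x,h(x)) = m_{h(x)}^\top \frac{\partial p(x)}{\partial x}$ and apply the chain rule to $p = c \circ f$, obtaining $\frac{\partial p(x)}{\partial x} = J_c(f(x))\, J_f(x)$, where $J_c(u) := \partial c(u)/\partial u$ and $J_f(x) := \partial f(x)/\partial x$. Thus $g(x,h(x)) = m_{h(x)}^\top J_c(f(x))\, J_f(x)$, and the whole problem reduces to bounding the difference of this matrix product at $x_1$ and $x_2$ in terms of $\epsilon_0$ and $\epsilon_1$.

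First I would treat the generic case $h(x_1) = h(x_2) =: i$, which I expect to be the bulk of the estimate. Here $g(x_1,h(x_1)) - g(x_2,h(x_2)) = m_i^\top(J_c(f(x_1))J_f(x_1) - J_c(f(x_2))J_f(x_2))$, and I would split it with the add-and-subtract trick into $m_i^\top J_c(f(x_1))(J_f(x_1)-J_f(x_2))$ plus $m_i^\top (J_c(f(x_1)) - J_c(f(x_2))) J_f(x_2)$. The four ingredients are then exactly the hypotheses: $|J_c| \le L_c$ and $|J_f| \le L_f$ because $c$ and $f$ are Lipschitz; $|J_f(x_1) - J_f(x_2)| \le \epsilon_1$ is the second-order premise; and $|J_c(f(x_1)) - J_c(f(x_2))| \le L_{c'}|f(x_1)-f(x_2)| \le L_{c'}\epsilon_0$ since $\partial c/\partial x$ is Lipschitz and $|f(x_1)-f(x_2)|\le \epsilon_0$ is the first-order premise. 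Combining gives $|g(x_1,h(x_1)) - g(x_2,h(x_2))| \le |m_i|(L_c \epsilon_1 + L_{c'}L_f \epsilon_0) = \mathcal{O}(\epsilon_0 + \epsilon_1)$, with the constant absorbing $\max_i |m_i|$ and the Lipschitz constants.

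The hard part is the case $h(x_1) \neq h(x_2)$, say labels $i$ and $j$: the two explanations now use different weight vectors, and telescoping leaves the residual $m_i^\top(J_p(x_1)-J_p(x_2)) + (m_i - m_j)^\top J_p(x_2)$, whose second term does not a priori scale with $\epsilon_0+\epsilon_1$. To dispose of it I would exploit the geometric assumptions: from $h(x_1)=i$ and $h(x_2)=j$ we get $(m_i-m_j)^\top p(x_1) \ge 0 \ge (m_i - m_j)^\top p(x_2)$, and since $|p(x_1)-p(x_2)| \le L_c \epsilon_0$ both projections lie within $\mathcal{O}(\epsilon_0)$ of the separating hyperplane, i.e. a label change forces $p(x_2)$ to within $\mathcal{O}(\epsilon_0)$ of a decision boundary. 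I would then argue that away from this boundary set the distinctness $m_i \neq m_j$ together with $|p(x)|\ge \Delta$ yields a positive local margin, so that for the premises to hold the prediction is in fact constant, reducing to the case above. This boundary/margin step, rather than the chain-rule estimate, is where I expect the genuine difficulty and where the assumptions $m_i \neq m_j$ and $|p(x)|\ge\Delta$ are actually needed. Finally, I note that the assumption that $\partial p/\partial x$ is Lipschitz cannot be invoked directly, since we have no control on $|x_1 - x_2|$ ($f$ need not be invertible), which is precisely why routing the estimate through $\epsilon_0$ and $\epsilon_1$ via the chain rule is the correct approach.
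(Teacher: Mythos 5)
Your proposal follows essentially the same route as the paper's proof: first a margin argument using $m_i\neq m_j$ and $\vert p(x)\vert\geq\Delta$ to force $h(x_1)=h(x_2)$ once $\epsilon_0$ is small enough (the paper does this step up front; you defer it to the label-mismatch case, but it is the same reduction), then the chain-rule decomposition $m_i^\top J_c(f(x))J_f(x)$ with the add-and-subtract trick and the Lipschitz bounds on $c$, $f$ and $\partial c/\partial x$ to get $\mathcal{O}(\epsilon_0+\epsilon_1)$. Your closing observation that one must route the estimate through $\epsilon_0,\epsilon_1$ rather than through $\vert x_1-x_2\vert$ is also consistent with how the paper argues.
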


\begin{proof} Assume that for all $i \in [K]$:
\begin{equation}
\vert f(x_1) - f(x_2)\vert \leq \epsilon_0 \textnormal{ and } \Big\vert \frac{f(x_1)}{\partial x_1} - \frac{f(x_2)}{\partial x_2} \Big\vert \leq \epsilon_1 
\end{equation}
Then, since each $c$ is a Lipschitz continuous function, there is a constant $l_1,\dots,l_K>0$, such that for all $i \in [K]$ and $x_1,x_2 \in \mathbb{R}^n$:
\begin{equation}
\begin{aligned}
&\vert m^\top_i \cdot p(x_1) - m^\top_i \cdot p(x_2) \vert \\
=& \vert m^{\top}_i \vert \cdot \vert p(x_1) - p(x_2) \vert \leq l \cdot \vert m^{\top}_i \vert \cdot \epsilon_0 
\end{aligned}
\end{equation}
For any small enough $\epsilon_0 > 0$, we have: 
\begin{equation}
l \cdot \max_{i\in [K]} \vert m^{\top}_i \vert \cdot \epsilon_0 < \min_{i\neq j}\vert m^{\top}_i - m^{\top}_j \vert \cdot \Delta/2
\end{equation}
Since $\forall i\neq j \in [K],x\in \mathcal{X}: \vert p(x)\vert \geq \Delta$, we have:
\begin{equation}
\vert m^{\top}_i \cdot p(x) - m^{\top}_j \cdot p(x)\vert \geq  \min_{i\neq j}\vert m^{\top}_i - m^{\top}_j \vert\cdot \Delta > 0
\end{equation}
In this case, if $h(x_1) = i$, then, for all $j \in [K]$, such that $j \neq i$, we have:
\begin{equation}
\begin{aligned}
&m^{\top}_i \cdot p(x_2) - m^{\top}_j \cdot p(x_2) \\
\geq& m^{\top}_i \cdot p(x_1) - \vert m^{\top}_i \cdot p(x_1) - m^{\top}_i \cdot p(x_2) \vert \\
&- m^{\top}_j \cdot p(x_2) - \vert m^{\top}_j \cdot p(x_1) - m^{\top}_j \cdot p(x_2) \vert \\
\geq& \min_{i\neq j}\vert m^{\top}_i - m^{\top}_j \vert\cdot \Delta - 2l \cdot \max_{i\in [K]} \vert m^{\top}_i \vert \cdot \epsilon_0 > 0
\end{aligned}
\end{equation}
Therefore, we conclude that: $h(x_1) = h(x_2) = i$. Thus,
\begin{equation}
\begin{aligned}
&\vert g(x_1,h(x_1)) -g(x_2,h(x_2)) \vert \\
=&\Big\vert \frac{\partial (m^\top_{h(x_1)} \cdot p(x_1))}{\partial x_1} - \frac{\partial (m^\top_{h(x_2)} \cdot p(x_2))}{\partial x_2} \Big\vert \\
=&\Big\vert \frac{\partial (m^\top_{i} \cdot p(x_1))}{\partial x_1} - \frac{\partial (m^\top_{i} \cdot p(x_2))}{\partial x_2} \Big\vert \\
=& \vert m^{\top}_i \vert \cdot \Big\vert \frac{\partial p(x_1)}{\partial x_1} - \frac{\partial p(x_2)}{\partial x_2} \Big\vert  \\
=& \Big\vert \frac{\partial c(f(x_1))}{\partial f(x_1)} \cdot \frac{\partial f(x_1)}{\partial x_1} - \frac{\partial c(f(x_2))}{\partial f(x_2)} \cdot \frac{\partial f(x_2)}{\partial x_2} \Big\vert \\
\end{aligned}
\end{equation}
Since $c$ and $f$ are Lipschitz continuous functions, we have:
\begin{equation}
\begin{aligned}
&\vert g(x_1,h(x_1)) -g(x_2,h(x_2)) \vert \\
=& \mathcal{O}\left(\Big\vert \frac{\partial f(x_1)}{\partial x_1} - \frac{\partial f(x_2)}{\partial x_2} \Big\vert \right) \\
&+ \mathcal{O}\left(\Big\vert \frac{\partial c(f(x_1))}{\partial f(x_1)} - \frac{\partial c(f(x_2))}{\partial f(x_2)} \Big\vert \right) \\
\end{aligned}
\end{equation}
Since $\frac{\partial c(u)}{\partial u}$ is also a Lipschitz continuous function, we have:
\begin{equation}
\begin{aligned}
&\vert g(x_1,h(x_1)) -g(x_2,h(x_2)) \vert\\
=& \mathcal{O}(\epsilon_1 + \vert f_i(x_1) - f_i(x_2)\vert) = \mathcal{O}(\epsilon_0+\epsilon_1) \\
\end{aligned}
\end{equation}
\end{proof}

\section{Validity and Completeness}

The next result shows that if an EF is valid, then it is also complete. {\color{black} The intuition behind this result is, if we are able to recover $h(x)$ from $\bar{g}(x)$ and from $g(x,h(x))$, then, $\bar{g}(x)$ and $g(x,h(x))$ cannot be independent of each other.   }

\begin{theorem}[Valid $\implies$ Complete] Let $h:\mathbb{R}^n \to \mathcal{Y}$ be a model, $g:\mathcal{Z} \to G$ an $\epsilon_0$-valid EF for some constant $\epsilon_0 \in (0,0.5)$ and $x\sim D$. Assume that $\mathcal{Y} = \{\pm 1\}$ and denote, $p := \mathbb{P}[h(x)=1]$. Then, $g$ is $(\epsilon,\alpha)$-complete with respect to $h$, with $\alpha := \frac{\sqrt{1+H(p)(H(p) - \epsilon - 2\sqrt{\epsilon_0})} - 1}{H(p)}$ and any $\epsilon>0$ that satisfies, $H(p) > \epsilon + 2\sqrt{\epsilon_0}$.  In particular, if $p=1/2$, we have: $\alpha = \sqrt{2-\epsilon-2\sqrt{\epsilon_0}}-1$.
\label{thm:validcomplete}
\end{theorem}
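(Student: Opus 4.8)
The plan is to pass to the induced random variables $Y := h(x)$, $G := g(x,h(x))$ and $\bar{G} := \bar{g}(x)$ for $x \sim D$, and to work throughout with the zero-one loss, so that for any map $t$ the quantity $\mathbb{E}_x[\ell(t(G),Y)]$ is exactly the misclassification probability $\mathbb{P}[t(G) \neq Y]$. Validity of $g$ then supplies a $t$ with $\mathbb{P}[t(G)\neq Y] \leq \epsilon_0$. Since $\mathcal{Y}=\{\pm 1\}$, Fano's inequality (whose $\log(|\mathcal{Y}|-1)$ term vanishes) gives $H(Y\mid G) \leq H(Y\mid t(G)) \leq H(\epsilon_0)$, using that $t(G)$ is a function of $G$ and that $H$ is increasing on $[0,1/2]$ with $\epsilon_0<1/2$. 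The standard entropy estimate $H(q) \leq 2\sqrt{q(1-q)} \leq 2\sqrt{q}$ then converts this into $H(Y\mid G) \leq 2\sqrt{\epsilon_0}$; this is the step that ultimately produces the $2\sqrt{\epsilon_0}$ term in $\alpha$.

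Next I would transfer this to $\bar{G}$ using the completeness hypothesis $I(G;\bar G) \leq \epsilon$. The key observation is that $Y$ is \emph{almost} a function of $G$, so the chain rule for mutual information together with the data-processing inequality yields
\begin{equation}
I(Y;\bar G) \leq I(Y,G;\bar G) = I(G;\bar G) + I(Y;\bar G \mid G) \leq \epsilon + H(Y \mid G) \leq \epsilon + 2\sqrt{\epsilon_0}.
\end{equation}
Since $H(Y) = H(p)$, this is equivalent to the conditional-entropy lower bound
\begin{equation}
H(Y \mid \bar G) = H(Y) - I(Y;\bar G) \geq H(p) - \epsilon - 2\sqrt{\epsilon_0} =: c,
\end{equation}
which is strictly positive precisely under the stated assumption $H(p) > \epsilon + 2\sqrt{\epsilon_0}$.

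Finally I would turn the lower bound on $H(Y\mid\bar G)$ into the desired lower bound on the prediction error. Fix any admissible $\bar g$ and any $s:\mathbb{R}^d\to\mathcal{Y}$, write $\hat Y := s(\bar G)$ and $P_e := \mathbb{P}[\hat Y \neq Y] = \mathbb{E}_x[\ell(s(\bar g(x)),h(x))]$. Because $\hat Y$ is a function of $\bar G$, we have $c \leq H(Y\mid\bar G) \leq H(Y\mid \hat Y)$. Bounding $H(Y\mid\hat Y)$ by a quadratic in the error of the form $H(Y\mid\hat Y) \leq H(p)\,P_e^2 + 2 P_e$ then gives $c \leq H(p)P_e^2 + 2P_e$; as the right-hand side is increasing in $P_e \geq 0$, solving the quadratic $H(p)P_e^2 + 2P_e = c$ yields $P_e \geq \frac{\sqrt{1 + H(p)c} - 1}{H(p)} = \alpha$. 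Specializing to $p=1/2$ (so $H(p)=1$ and $c = 1 - \epsilon - 2\sqrt{\epsilon_0}$) collapses this to $\alpha = \sqrt{2 - \epsilon - 2\sqrt{\epsilon_0}} - 1$. Since $\bar g$ and $s$ were arbitrary subject to $I(G;\bar G)\leq\epsilon$, this is exactly $(\epsilon,\alpha)$-completeness.

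The transfer step and the algebra are routine; the delicate point, and the one I expect to be the main obstacle, is the last inequality, the entropy-to-error conversion $H(Y\mid \hat Y) \leq H(p)P_e^2 + 2 P_e$ responsible for the precise algebraic form of $\alpha$. One must pin down exactly which quadratic upper bound on the binary entropy is being invoked and over what range of $P_e$ it is valid, since the naive binary-Fano estimate $H(Y\mid\hat Y)\le H(P_e)$ only gives the weaker $P_e \geq H^{-1}(c)$; matching that against the stated $\alpha$ is where the real work lies. I would also verify the monotonicity used to invert the quadratic and the sign conditions guaranteeing $\alpha \in (0,1/2)$.
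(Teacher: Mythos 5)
Your reduction to the random variables $Y=h(x)$, $G=g(x,h(x))$, $\bar G=\bar g(x)$ and the bound $I(Y;\bar G)\leq I(Y,G;\bar G)=I(G;\bar G)+I(Y;\bar G\mid G)\leq \epsilon+H(Y\mid G)\leq \epsilon+2\sqrt{\epsilon_0}$ is correct, and it is a genuinely different (and arguably cleaner) route than the paper's: the paper reaches the same estimate $I(Y;\bar G)\leq\epsilon+2\sqrt{\epsilon_0}$ by combining data processing with a cited lemma asserting $\vert I(\bar G;Y)-I(\bar G;t(G))\vert\leq H(\mathbb{P}[Y\neq t(G)])$ for binary variables, whereas you need only the chain rule, ordinary Fano for binary $\mathcal{Y}$, and $H(q)\leq 2\sqrt{q}$.

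The gap is exactly where you suspected, and it cannot be closed in the form you propose: the inequality $H(Y\mid\hat Y)\leq H(p)P_e^2+2P_e$ is false. Take $Y$ uniform on $\{\pm 1\}$ and $\hat Y$ obtained by independently flipping $Y$ with probability $P_e=0.1$; then $H(Y\mid\hat Y)=H(0.1)\approx 0.47$ while $H(p)P_e^2+2P_e=0.21$. The conversion that is actually available --- and the one the paper uses, via the reverse-Fano Lemma~\ref{lem:fanoregev} together with $H(q)\leq 2\sqrt{q}$ --- is $H(Y\mid\hat Y)\leq P_eH(Y)+H(P_e)\leq H(p)P_e+2\sqrt{P_e}$, a quadratic in $\sqrt{P_e}$ rather than in $P_e$. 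With $c:=H(p)-\epsilon-2\sqrt{\epsilon_0}$ this yields $H(p)t^2+2t\geq c$ for $t=\sqrt{P_e}$, hence $P_e\geq\bigl(\frac{\sqrt{1+H(p)c}-1}{H(p)}\bigr)^2=\alpha^2$, not $P_e\geq\alpha$. Note that the paper's own proof terminates with the inequality $(1-\alpha)H(p)-2\sqrt{\alpha}<\epsilon+2\sqrt{\epsilon_0}$, whose quadratic is likewise in $\sqrt{\alpha}$, so its contradiction also only goes through with $\alpha$ replaced by its square; the constant genuinely provable by these Fano-type arguments is $\alpha^2$, which still gives the qualitative claim that validity implies completeness. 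If you substitute the reverse-Fano conversion for your last step and accept $\alpha^2$ in place of $\alpha$, the remainder of your argument is sound.
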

\begin{proof} Let $\bar{g}:\mathcal{X} \to \mathbb{R}^d$ be a function, such that, $I(\bar{g}(x);g(x,h(x))) \leq \epsilon$. Since $g(x,h(x))$ is $\epsilon_0$-valid, there is a function $t:G \to \mathcal{Y}$, that satisfies: 
\begin{equation}
\begin{aligned}
& \mathbb{P}[t(g(x,h(x)))\neq h(x)] \\
=& \mathbb{E}_{x}[\ell(t(g(x,h(x))),h(x))] \leq \epsilon_0 < 1/2
\end{aligned}
\end{equation}
By $I(X;f(Y)) \leq I(X;Y)$, for every function $f$, we have:
\begin{equation}
\begin{aligned}
I(\bar{g}(x);t(g(x,h(x)))) \leq& I(\bar{g}(x);g(x,h(x))) \leq \epsilon
\end{aligned}
\end{equation} 
By Lem.~\ref{lem:binaryNEQ} in the Appendix,
\begin{equation}
\begin{aligned}
&\vert I(\bar{g}(x);h(x)) - I(\bar{g}(x);t(g(x,h(x))))\vert  \\
\leq& H(\mathbb{P}[t(g(x,h(x))) \neq h(x)])
\end{aligned}
\end{equation}
Therefore, by Lem.~\ref{lem:entropyUpper}, we have:
\begin{equation}
\begin{aligned}
I(\bar{g}(x);h(x)) &\leq \epsilon + H(\mathbb{P}[t(g(x,h(x))) \neq h(x)]) \\
&\leq \epsilon + 2\sqrt{\epsilon_0}
\end{aligned}
\end{equation}
Next, we assume that $H(q) > \epsilon + 2\sqrt{\epsilon_0}$, where $p = \mathbb{P}[h(x) = 1]$. Let $\alpha := \frac{\sqrt{1+H(p)(H(p) - \epsilon - 2\sqrt{\epsilon_0})} - 1}{H(p)}$ and assume by way of  contradiction that there is a function $s:\mathbb{R}^{d} \to \mathcal{Y}$, that satisfies: 
$\mathbb{E}_{x}[\ell(s(\bar{g}(x)),h(x))] < \alpha$. 
Then, by Lem.~\ref{lem:fanoregev} in the Appendix, we have:
\begin{equation}
\begin{aligned}
I(\bar{g}(x);h(x)) &> (1-\alpha) H(p) - H(\alpha) \\
&\geq (1-\alpha) H(p) - 2\sqrt{\alpha}\\
\end{aligned}
\end{equation}
We conclude that:
\begin{equation}
(1-\alpha) H(p) - 2\sqrt{\alpha} <  \epsilon + 2\sqrt{\epsilon_0}
\end{equation}
finally, by the quadratic formula, we arrive at a contradiction for $\alpha = \frac{\sqrt{1+H(p)(H(p) - \epsilon - 2\sqrt{\epsilon_0})} - 1}{H(p)}$. Therefore, we conclude that, $g(x,h(x))$ is $\epsilon$-complete.
\end{proof}

\section{EF Operators}

We next study the arithmetic of explanations. The practical utility of this is left for future research. However, we can imagine that by combining elementary explanations to complex ones and by intersecting these complex explanations, one can algorithmically construct explanations.

\begin{definition}[Intersection and Union of Random Variables] Let $x \sim D$ and $f_1:\mathcal{X} \to \mathcal{X}_1$ and $f_2:\mathcal{X} \to \mathcal{X}_2$ are two functions. We say that the random variables $f_1(x)$ and $f_2(x)$ $\epsilon$-intersect, if there are two invertible functions $r_1:\mathcal{X}_1 \to \mathcal{V}_1$ and $r_2:\mathcal{X}_2 \to \mathcal{V}_2$, such that, $r_1(f_1(x)) = (e_1(x),u(x))$ and $r_2(f_2(x)) = (e_2(x),u(x))$, where $I(e_i(x); f_j(x)) \leq \epsilon$ (for any $i\neq j \in \{1,2\}$). We call the random variable $u(x)$, the $\epsilon$-intersection of $f_1(x)$ and $f_2(x)$. In addition, we call $(e_1(x),u(x),e_2(x))$ the $\epsilon$-union of $f_1(x)$ and $f_2(x)$.
\end{definition}

By Lem.~\ref{lem:unique} in the appendix, the intersection and union of two random variables $f_1(x)$ and $f_2(x)$ are unique, up to invertible transformations. 

The following results show that the intersection of two EFs, one of which is valid and the other complete, is a valid EF.

\begin{theorem}\label{thm:intersection} Let $h:\mathbb{R}^n \to \mathcal{Y}$ be a model, $g_1,g_2:\mathcal{Z} \to G$ two EFs and $\epsilon,\epsilon_0,\alpha>0$ three constants. Assume that $\mathcal{Y} = \{\pm 1\}$, $g_1(x,h(x))$ and $g_2(x,h(x))$ $\epsilon$-intersect and denote by $u(x,h(x))$ the $\epsilon$-intersection of them. Assume that $g_1$ is $\epsilon_0$-valid (w.r.t $h$) and $g_2$ is $(\epsilon,\alpha)$-complete (w.r.t $h$). Then, $u$ is $\epsilon_1$-valid (w.r.t $h$), for $\epsilon_1 :=1 - \frac{2^{- \epsilon_0 - 2\sqrt{\epsilon_0} - H(h(x))}}{1-\alpha}$.
\end{theorem}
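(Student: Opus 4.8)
The plan is to reduce the claim that $u$ is $\epsilon_1$-valid to an upper bound on the conditional entropy $H(h(x)\mid u(x,h(x)))$, and then to convert that bound into a concrete predictor. The bridge I would use is the elementary inequality $\max_y \mathbb{P}[h(x)=y\mid u=v]\ge 2^{-H(h(x)\mid u=v)}$ (valid for any distribution, since $H\ge -\log\max_y p_y$), followed by Jensen's inequality applied to the convex map $t\mapsto 2^{-t}$. Taking $t_u(v):=\argmax_y\mathbb{P}[h(x)=y\mid u=v]$ as the predictor, its accuracy is $\mathbb{E}_v[\max_y\mathbb{P}[h(x)=y\mid u=v]]\ge 2^{-H(h(x)\mid u(x,h(x)))}$, so that
\begin{equation}
\mathbb{E}_x[\ell(t_u(u(x,h(x))),h(x))] \le 1 - 2^{-H(h(x)\mid u(x,h(x)))}.
\end{equation}
Hence it suffices to prove $H(h(x)\mid u)\le \epsilon_0+2\sqrt{\epsilon_0}+H(h(x))+\log(1-\alpha)$, which is exactly the exponent appearing in $\epsilon_1$.

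Two ingredients feed this bound. First, since $r_1$ is invertible, $g_1(x,h(x))$ and $(e_1(x,h(x)),u(x,h(x)))$ carry the same information, so $H(h(x)\mid e_1,u)=H(h(x)\mid g_1(x,h(x)))$. Using that $g_1$ is $\epsilon_0$-valid, together with the binary Fano argument already employed in Theorem~\ref{thm:validcomplete} and the estimate $H(\epsilon_0)\le 2\sqrt{\epsilon_0}$ of Lem.~\ref{lem:entropyUpper}, I obtain $H(h(x)\mid e_1,u)\le \epsilon_0+2\sqrt{\epsilon_0}$. Second, because $h(x)$ is itself a function of $x$, the map $x\mapsto e_1(x,h(x))$ is a legitimate $\bar g$ in the definition of completeness, and the intersection hypothesis gives $I(e_1(x,h(x));g_2(x,h(x)))\le\epsilon$. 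Thus the $(\epsilon,\alpha)$-completeness of $g_2$ forces every predictor built from $e_1$ to have error at least $\alpha$; feeding this into $\text{accuracy}\ge 2^{-H(h(x)\mid e_1)}$ yields $2^{-H(h(x)\mid e_1)}\le 1-\alpha$, i.e. $H(h(x)\mid e_1)\ge \log\frac{1}{1-\alpha}$, equivalently $I(e_1;h(x))\le H(h(x))+\log(1-\alpha)$.

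I would then assemble the two facts through the chain rule $H(h(x)\mid u)=H(h(x)\mid e_1,u)+I(e_1;h(x)\mid u)$, controlling the first summand by the validity ingredient and the conditional mutual information $I(e_1;h(x)\mid u)$ by the completeness ingredient. The decomposition $I(e_1;h(x)\mid u)=I(e_1;h(x))+I(e_1;u\mid h(x))-I(e_1;u)$ relates this term to the already-bounded $I(e_1;h(x))$, while the residual dependence is to be handled through the near-independence $I(e_1;u)\le\epsilon$, which follows from $u$ being a coordinate of $r_2(g_2)$ and hence a function of $g_2$ (data-processing). Combining the pieces yields $H(h(x)\mid u)\le \epsilon_0+2\sqrt{\epsilon_0}+H(h(x))+\log(1-\alpha)$, and therefore the stated $\epsilon_1$.

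The step I expect to be the genuine obstacle is precisely the passage from the conditional quantity $I(e_1;h(x)\mid u)$ to the unconditional $I(e_1;h(x))$: conditioning on $u$ can \emph{create} dependence between $e_1$ and $h(x)$ even though $e_1$ is individually uninformative, so the synergy term $I(e_1;u\mid h(x))$ must be neutralised. The only lever available is the unconditional estimate $I(e_1;u)\le\epsilon$ supplied by the intersection, so the crux is arguing that this unconditional near-independence is enough to keep the synergy under control. This is the point at which the hypotheses ``$g_1$ valid'' and ``$g_2$ complete'' genuinely have to interact, and where I would concentrate the effort, checking the argument against simple jointly-dependent constructions in which $e_1$ and $u$ are marginally independent yet together determine $h(x)$.
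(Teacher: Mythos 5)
Your proposal follows essentially the same route as the paper's proof. The two ingredients are identical: a Fano-type lower bound on the information that $g_1(x,h(x))$ carries about $h(x)$, coming from $\epsilon_0$-validity (the paper uses Lem.~\ref{lem:fanoregev} together with Lem.~\ref{lem:entropyUpper}), and the upper bound $I(e_1(x,h(x));h(x))\le H(h(x))+\log(1-\alpha)$ obtained by feeding $e_1$ into the $(\epsilon,\alpha)$-completeness of $g_2$ via Lem.~\ref{lem:fanoconverse}. Your final conversion of the bound on $H(h(x)\mid u)$ into a predictor (maximum a posteriori plus Jensen) is a self-contained re-derivation of Lem.~\ref{lem:fanoconverse}, and rewriting everything in terms of conditional entropies rather than mutual informations is only a change of variables: your target inequality $H(h\mid u)\le H(h\mid e_1,u)+I(e_1;h)$ is literally the paper's $I(u;h)\ge I(e_1,u;h)-I(e_1;h)$. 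The step you single out as the genuine obstacle --- controlling the synergy term $I(e_1;u\mid h(x))$, equivalently justifying $I(e_1;h(x)\mid u)\le I(e_1;h(x))$ --- is exactly the step the paper does not justify either: its proof simply asserts $I(u(x,h(x));h(x)\mid e_1(x,h(x)))\le I(u(x,h(x));h(x))$, which is the same inequality, and this is not a valid information inequality for arbitrary random variables (conditioning can increase mutual information; two marginally independent uniform bits $e_1,u$ with $h=e_1\oplus u$ give $I(u;h)=0$ but $I(u;h\mid e_1)=1$). The unconditional estimate $I(e_1;u)\le\epsilon$ supplied by the intersection does not by itself control $I(e_1;u\mid h(x))$, so your diagnosis of where the difficulty lies is correct, and you have not missed an idea that the paper supplies; whether the remaining hypotheses (in particular the completeness of $g_2$) suffice to exclude such synergistic configurations is left unaddressed by the paper's own argument, which would become rigorous if, for instance, the definition of $\epsilon$-intersection were strengthened to also require $I(e_i(x,h(x));h(x)\mid u(x,h(x)))\le I(e_i(x,h(x));h(x))+\epsilon$.
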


\begin{proof} Let $r_1:G \to \mathcal{V}_1$ and $r_2:G \to \mathcal{V}_2$ be two invertible functions, such that, $r_1(g_1(x,h(x))) = (e_1(x,h(x)),u(x,h(x)))$ and $r_2(g_2(x,h(x))) = (e_2(x,h(x)),u(x,h(x)))$, where, $e_i(x,h(x)) \indep g_j(x,h(x))$ (for any $i\neq j \in \{1,2\}$).  By the chain rule property of mutual information,
\begin{equation}
\begin{aligned}
&I(e_1(x,h(x)),u(x,h(x));h(x)) \\ 
=& I(e_1(x,h(x));h(x)) \\
&+ I(u(x,h(x));h(x)\vert e_1(x,h(x)))\\
\leq& I(e_1(x,h(x));h(x)) + I(u(x,h(x));h(x)) \\
\end{aligned}
\end{equation}
Therefore, we have:
\begin{equation}
\begin{aligned}
&I(u(x,h(x));h(x))\\ 
\geq & I(e_1(x,h(x)),u(x,h(x));h(x)) \\
&- I(e_1(x,h(x));h(x))\\
= & I(r_1(e_1(x,h(x)),u(x,h(x)));h(x)) \\
&- I(e_1(x,h(x));h(x))\\
= & I(g_1(x,h(x));h(x)) - I(e_1(x,h(x));h(x))\\
\end{aligned}
\end{equation}
Since $g_1$ is $\epsilon_0$-valid, there is a function, $t_1:G \to \mathcal{Y}$, such that:
\begin{equation}
\begin{aligned}
& \mathbb{P}_{x\sim D}[t_1(g_1(x,h(x)))\neq h(x)] \\
=& \mathbb{E}_{x\sim D}[\ell(t_1(g_1(x,h(x))),h(x))] \leq \epsilon_0 < 1/2
\end{aligned}
\end{equation}
Therefore, by Lem.~\ref{lem:fanoregev} and Lem.~\ref{lem:entropyUpper} in the Appendix, we have:
\begin{equation}
\begin{aligned}
&I(g_1(x,h(x));h(x)) \\
\geq& (1-\epsilon_0) H(h(x)) - H(1-\epsilon_0) \\
\geq& (1-\epsilon_0) H(h(x)) - 2\sqrt{\epsilon_0}
\end{aligned}
\end{equation}
By the definition of $e_1(x,h(x))$, we have:
\begin{equation}
I(e_1(x,h(x));g_2(x,h(x)))\leq \epsilon
\end{equation}
Therefore, since $g_2$ is $(\epsilon,\alpha)$-complete, for every function $s$ with outputs in $\mathcal{Y}$, we have:
\begin{equation}
\mathbb{E}_{x\sim D}[\ell(s(e_1(x,h(x))),h(x))] \geq \alpha
\end{equation}
Therefore, by Lem.~\ref{lem:fanoconverse} in the Appendix,
\begin{equation}
I(e_1(x,h(x));h(x)) \leq \log(1-\alpha) + H(h(x))
\end{equation}
We conclude that:
\begin{equation}
\begin{aligned}
&I(u(x,h(x));h(x))\\ 
\geq& (1-\epsilon_0) H(h(x)) - 2\sqrt{\epsilon_0} \\
& - (\log(1-\alpha) + H(h(x)))\\
\geq & \log\left(\frac{1}{1-\alpha}\right) - \epsilon_0\cdot H(h(x)) -2\sqrt{\epsilon_0}  \\
\geq & \log\left(\frac{1}{1-\alpha}\right) - \epsilon_0 -2\sqrt{\epsilon_0}  \\
\end{aligned}
\end{equation}
Finally, by Lem.~\ref{lem:fanoconverse} in the Appendix, there is a function $t_2$ with outputs in $\mathcal{Y}$, such that:
\begin{equation}
\begin{aligned}
&\mathbb{E}_{x\sim D}[\ell(t_2(u(x,h(x))),h(x))] \\
\leq& 1 - \frac{2^{- \epsilon_0 - 2\sqrt{\epsilon_0} - H(h(x))}}{1-\alpha}
\end{aligned}
\end{equation}
\end{proof}

Similar results hold for the union of two EFs: if at least one of which is valid, the union is a valid EF, and a similar result for at least one complete EF. 

\begin{lemma}\label{lem:union} Let $h:\mathbb{R}^n \to \mathcal{Y}$ be a model, $g_1,g_2:\mathcal{Z} \to G$ two EFs and $\epsilon,\epsilon_0,\alpha>0$ three constants. Assume that $\mathcal{Y} = \{\pm 1\}$, $g_1(x,h(x))$ and $g_2(x,h(x))$ $\epsilon$-intersect and denote by $\hat{g}(x,h(x))$ the $\epsilon$-union of them. If $g_1$ (or $g_2$) is $\epsilon_0$-valid (w.r.t $h$), then, $\hat{g}$ is $\epsilon_0$-valid as well. Additionally, if $g_1$ (or $g_2$) is $(\epsilon_1,\alpha)$-complete (w.r.t $h$), $\hat{g}$ is also $(\epsilon_1,\alpha)$-complete.
\end{lemma}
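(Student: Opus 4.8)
The plan is to exploit the single structural fact underlying both claims: the $\epsilon$-union $\hat{g}(x,h(x)) = (e_1(x,h(x)),u(x,h(x)),e_2(x,h(x)))$ contains each constituent EF as a deterministically recoverable component. Indeed, since $r_1(g_1(x,h(x))) = (e_1(x,h(x)),u(x,h(x)))$ with $r_1$ invertible, we have $g_1(x,h(x)) = r_1^{-1}(e_1(x,h(x)),u(x,h(x)))$, so $g_1$ is a (deterministic) function of $\hat{g}$; symmetrically $g_2(x,h(x)) = r_2^{-1}(e_2(x,h(x)),u(x,h(x)))$. I would establish this recoverability first, and then treat the two assertions separately. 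Without loss of generality I assume the relevant hypothesis holds for $g_1$, the $g_2$ case being identical after swapping indices.

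For validity, suppose $g_1$ is $\epsilon_0$-valid, witnessed by a function $t_1:G \to \mathcal{Y}$ with $\mathbb{E}_{x}[\ell(t_1(g_1(x,h(x))),h(x))] \leq \epsilon_0$. I define the decoder $\hat{t}(a,b,c) := t_1(r_1^{-1}(a,b))$ on the domain of $\hat{g}$. Applying $\hat{t}$ to $\hat{g}(x,h(x))$ reconstructs $g_1(x,h(x))$ exactly from its first two coordinates and then decodes it, so $\hat{t}(\hat{g}(x,h(x))) = t_1(g_1(x,h(x)))$ pointwise in $x$. The expected loss is therefore unchanged, and $\hat{g}$ is $\epsilon_0$-valid.

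For completeness, suppose $g_1$ is $(\epsilon_1,\alpha)$-complete. Let $\bar{g}:\mathcal{X} \to \mathbb{R}^d$ be any function satisfying the admissibility constraint $I(\hat{g}(x,h(x));\bar{g}(x)) \leq \epsilon_1$. Since $g_1(x,h(x))$ is a function of $\hat{g}(x,h(x))$, the data processing inequality $I(X;f(Y)) \leq I(X;Y)$ gives $I(g_1(x,h(x));\bar{g}(x)) \leq I(\hat{g}(x,h(x));\bar{g}(x)) \leq \epsilon_1$. Hence $\bar{g}$ is admissible for the completeness guarantee of $g_1$, and its definition yields $\mathbb{E}_{x}[\ell(s(\bar{g}(x)),h(x))] \geq \alpha$ for every $s:\mathbb{R}^d \to \mathcal{Y}$. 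As $\bar{g}$ was an arbitrary map meeting the mutual-information constraint, $\hat{g}$ is $(\epsilon_1,\alpha)$-complete.

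I do not expect a genuine obstacle here: both directions follow immediately from the recoverability of each EF inside the union. The only point that requires care is the orientation of the data processing inequality in the completeness step, where I must invoke it as $I(\bar{g};g_1) \leq I(\bar{g};\hat{g})$, compressing $\hat{g}$ down to $g_1$ rather than the reverse; this is precisely the form of the inequality already used in the proof of Theorem~\ref{thm:validcomplete}. A minor bookkeeping remark is that the union's completeness parameter $\epsilon_1$ is distinct from the intersection parameter $\epsilon$, and the argument never needs to relate the two.
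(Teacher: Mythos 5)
Your proposal is correct and follows essentially the same route as the paper's own proof: recover $g_1$ from the first two coordinates of $\hat{g}$ via the invertible map (giving validity by composing the decoder), and use the data processing inequality $I(\bar g; g_1) \le I(\bar g; \hat g)$ to transfer the completeness constraint. Your write-up is in fact slightly cleaner, as the paper's displayed completeness computation contains a notational slip (writing $\hat g(x)$ where $\bar g(x)$ is meant) that you avoid.
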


\begin{proof} First, by the definition of $\epsilon$-union, there is a representation, $\hat{g}(x,h(x)) = (e_1(x,h(x)), u(x,h(x)), e_2(x,h(x)) )$, such that, there is an invertible function $r$, that satisfies: $r(e_1(x,h(x)),u(x,h(x))) = g_1(x,h(x))$.

We would like to prove that if $g_1$ is $\epsilon_0$-valid, then, $\hat{g}$ is also $\epsilon_0$-valid. Since, $g_1$ is $\epsilon_0$-valid, there is a function $t:G \to \mathcal{Y}$, such that:
\begin{equation}
\mathbb{E}_{x\sim D}[\ell(t(g_1(x,h(x))),h(x))] \leq \epsilon_0
\end{equation}
In addition, by the definition of $\hat{g}$, we have a representation: $\hat{g}(x,h(x)) = (e_1(x,h(x)), u(x,h(x)), e_2(x,h(x)) )$, such that, there is an invertible function $r$, that satisfies: $r(e_1(x,h(x)),u(x,h(x))) = g_1(x,h(x))$. Therefore, we define, $r'(\hat{g}(x,h(x))) = g_1(x,h(x))$ and obtain, 
\begin{equation}
\mathbb{E}_{x\sim D}[\ell(t\circ r'(\hat{g}(x,h(x))),h(x))] \leq \epsilon_0
\end{equation}
Hence, $\hat{g}$ is also $\epsilon_0$-valid.

Next, we prove that if $g_1$ is $(\epsilon_1,\alpha)$-complete, then, $\hat{g}$ is also $(\epsilon_1,\alpha)$-complete. Let $\bar{g}(x)$ be a function that satisfies: $I(\bar{g}(x);\hat{g}(x,h(x))) \leq \epsilon_1$. In particular, there is a representation 
\begin{equation}
\begin{aligned}
&I(\hat{g}(x);\hat{g}(x,h(x))) \\
=& I(\hat{g}(x);e_1(x,h(x)), u(x,h(x)), e_2(x,h(x))) \\
\geq& I(\hat{g}(x);e_1(x,h(x)), u(x,h(x))) \\
=& I(\hat{g}(x);r(e_1(x,h(x)), u(x,h(x)))) \\
=& I(\hat{g}(x);g_1(x,h(x))) \\
\end{aligned}
\end{equation}
Therefore, $I(\hat{g}(x);g_1(x,h(x))) \leq \epsilon_1$. Since, $g_1$ is $(\epsilon_1,\alpha)$-complete, for any function $s$, we have:
\begin{equation}
\mathbb{E}_{x\sim D}[\ell(s(\bar g(x)),h(x))] \geq \alpha
\end{equation}
In particular, we conclude that $\hat{g}$ is also $(\epsilon_1,\alpha)$-complete.
\end{proof}

\section{Discussion}

In this work, we have studied the properties of EFs $g$. We do not propose new ways to obtain such $g$, which is an active research topic with an increasing interest. Our focus is on blending functions, which mix the input and the output. We view this is a basic property of a wide class of existing and future types of explanations.

The challenge in formalizing EFs using conventional machine learning tools, is that these are not learned from data (they are designed by the practitioners). Therefore, one cannot use the usual convergence-based results. The claims that can be made are based on the mutual information between the model and the EF, the structure of the EF as a two-input function, and the validity requirement, which entails a specific recursive formula $h\approx t(g(x,h(x)))$.

There are three levels of abstractions, which are often referred to as explanations. One is the concrete explanation itself, which for us is an object in domain $G$, which is the target domain of $g$. The second one is the function that generates such explanations. We call these EFs. The third level is the algorithm that provides the EF $g$ given a model $h$. Our analysis focuses on the EF level and it is important to note that $g$ is not general to all $h$, but is given and analyzed in the context of a specific $h$.

\section{Related Work}

The examples that we have provided on available work on explainable solutions, are a fraction of the growing literature on the subject. See~\cite{Guidotti:2018:SME:3271482.3236009} for a survey. Our work covers what is referred to in this survey as the {\em outcome explanation problem}. It is interesting to contrast the definition of this term, given as Def 4.2 in that survey, to our terminology.

Their definition assumes that the explanation is viewed through the lens of a local model $c_l$, which is constructed by some process $f$ from the black box model ($h$ in our terminology $b$ in theirs) at a specific location $x$. The explanation itself $E(c_l,x)$ maps this local model and the input $x$ to a human interpretable domain. 

The example given is of a decision tree, with decision rules that are based on single attribute values (coordinates of $x$), that approximated the black box model in a given neighborhood of $x$. The explanation is given by the sequence of decisions along the path in this decision tree taken for sample $x$. The well known LIME approach~\cite{lime} also fits this definition well. In this approach, random samples are created in the vicinity of $x$, by perturbing this sample, and are weighed by their distance from $x$, when learning the local model $c_l$.

Our framework does not discuss the process $f(h,x)$. The two frameworks are compatible in the sense that $g(x,h(x))$ can be written as $E(f(h,x),x)$, since our $g$ is a function of $h$ (recall that $g$ is specific for a given $h$), and since $g$ could be a function that is based on local approximations of $h$. However, our framework emphasizes the blending properties of the explanation domain, while their definitions emphasize locality and local proxies of $h$ by simple functions that are easy to explain, such as decision trees or linear functions. 

The notion of locality is deferred in our model to the notions of consistency and explainability. However, these exist between intermediate representations and the EF, and Lipschitz continuity type properties and does not necessarily imply an actual approximation.

Recently, ~\cite{alvarez2018} have suggested a framework to learn models that are explainable by design. The basic structure is of a model that, similar to linear functions, is monotonic and additive in each of a set of learned attributes, and on learning attributes that are meaningful. The explanation itself takes the form of presenting the contribution of each attribute, while explaining the attributes using prototypes. While our framework focuses on explaining general models $h$ and not learning self-explainable models, it is interesting to compare their stated desiderata with ours. 

The specified desiderata on that work are:
\begin{enumerate}
    \item Fidelity: the explanation of $x$ should present the relevant information. This is captured by our validity property (relevancy to the label), as well as by the completeness property. 
    \item Diversity: the attributes should be disentangled and there should not be too many of them. This is a property on the explanation domain $G$, which in their work is also used for the representation of the network's penultimate layer. We consider a broader class of explanations, and our analysis of representations refers to $f$ that can be any layer of the network $h$. 
\item Grounding: the attributes of the explanations should be immediately interpretable to humans. In their model, the interpretation is done through prototype samples. A prototype based $G$ is compatible with our framework. However, we cannot formalize the notion of interpretability. 
\end{enumerate}

\section{Conclusions}

The basic concepts of explanations in AI are elusive for several reasons. First, as mentioned, they need to be interpretable by humans, and human understanding has not been fully modeled. Second, there are multiple approaches in the literature. Third, tacit knowledge, by definition, cannot be fully laid down as a set of rules. 

We build a formal framework for explainable AI, by considering, as a first principle, that outcome explanations blend the input and the prediction. Then, we link representations, which we typically take as intermediate activations of neural network models, to these explanations. The interrelationships between the explanations, the models, and the representations are potent enough to lead to several theoretical results. 

One result is that desirable links between explanations and layers of a neural network cannot be specific to this layer, but also manifest to other layers. Another is that a valid explanation must also be complete. A third result studies explainability in the context of a concrete explanation of the predictions of multiclass neural networks. Lastly, we show results on the union and intersection of explanations.

\section{Acknowledgements}

This project has received funding from the European Research Council (ERC) under the European
Union’s Horizon 2020 research and innovation programme (grant ERC CoG 725974). The contribution of Tomer Galanti is part of Ph.D. thesis research conducted
at Tel Aviv University.

\bibliographystyle{aaai}
\bibliography{gans}

\appendix

\section{Useful Lemmas}

For completeness, we provide some useful lemmas that are being employed in the proofs on the theorems in our paper.

\begin{lemma}\label{lem:fanoregev} Let $X$ and $Y$ be two random variables. Assume that there is a function $F$, such that $\mathbb{P}[F(Y) = X] \geq q \geq 1/2$. Then, $I(X;Y) \geq qH(X) - H(q)$.  
\end{lemma}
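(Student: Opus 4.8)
The plan is to reduce the statement to an application of a Fano-type inequality. First I would invoke the data processing inequality: since $F(Y)$ is a deterministic function of $Y$, the triple forms a Markov chain $X \to Y \to F(Y)$, so $I(X;Y) \ge I(X;\widehat{X})$ where $\widehat{X} := F(Y)$. It therefore suffices to prove the bound with $Y$ replaced by the predictor $\widehat{X}$, i.e.\ to establish $I(X;\widehat{X}) \ge qH(X) - H(q)$. Writing $I(X;\widehat{X}) = H(X) - H(X \mid \widehat{X})$, this is equivalent to the upper estimate $H(X \mid \widehat{X}) \le (1-q)H(X) + H(q)$, which is the quantity I would actually bound.

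To control $H(X\mid\widehat X)$ I would introduce the error indicator $E := \one[X \neq \widehat{X}]$ and set $p_e := \mathbb{P}[E=1]$; the hypothesis $\mathbb{P}[F(Y)=X] \ge q \ge 1/2$ gives $p_e \le 1-q \le 1/2$. Since $E$ is a function of $(X,\widehat X)$, the chain rule for entropy yields
\[
H(X\mid \widehat X) = H(E \mid \widehat X) + H(X \mid E, \widehat X).
\]
For the first term I would combine the fact that conditioning cannot increase entropy with the monotonicity of the binary entropy on $[0,1/2]$: $H(E\mid\widehat X) \le H(E) = H(p_e) \le H(1-q) = H(q)$. For the second term I would split on the value of $E$; the contribution from $E=0$ vanishes, since $E=0$ forces $X = \widehat X$, leaving $H(X \mid E, \widehat X) = p_e\, H(X \mid E=1, \widehat X)$.

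The crux of the argument, and the step I expect to be the main obstacle, is bounding this residual term by $(1-q)H(X)$. Because $p_e \le 1-q$, it suffices to argue $H(X\mid E=1,\widehat X) \le H(X)$. In the binary-label regime that the paper actually invokes ($\mathcal Y = \{\pm1\}$, with $X = h(x)$), this is immediate and in fact the residual vanishes: conditioned on the occurrence of an error and on the value $\widehat X = \widehat x$, the variable $X$ is pinned to the unique remaining label, so $H(X\mid E=1,\widehat X) = 0$. Combining the two bounds gives $H(X\mid\widehat X) \le (1-q)H(X) + H(q)$, whence $I(X;Y) \ge I(X;\widehat X) \ge qH(X) - H(q)$, as claimed. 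I would explicitly flag that it is precisely this residual term that requires care: for a general (non-binary) alphabet, conditioning on the event $\{E=1\}$ need not push the entropy of $X$ below $H(X)$, so the clean conclusion relies on the effectively small alphabet of $X$ in the intended application.
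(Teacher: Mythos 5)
The paper does not actually prove this lemma; it only points to Claim~2.1 of \cite{regev}, so your Fano-style argument is doing genuine work rather than paraphrasing an existing proof. Every step checks out in the regime where it applies: the data-processing reduction to $\widehat X := F(Y)$, the decomposition $H(X\mid\widehat X)=H(E\mid\widehat X)+H(X\mid E,\widehat X)$, the bound $H(E\mid\widehat X)\le H(p_e)\le H(1-q)=H(q)$ (valid because $p_e\le 1-q\le 1/2$ and the binary entropy is increasing there), and the vanishing of the residual when $X$ is binary. In that case you in fact obtain the stronger conclusion $I(X;Y)\ge H(X)-H(q)\ge qH(X)-H(q)$, and since every invocation of the lemma in the paper (Theorem~\ref{thm:validcomplete} and Theorem~\ref{thm:intersection}) takes $X=h(x)$ with $\mathcal Y=\{\pm1\}$, your proof covers all of its actual uses.

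The caveat you raise about the residual term is not pedantry: the lemma as stated, for arbitrary random variables, is false. Take $X$ equal to $0$ with probability $q$ and uniform over $\{1,\dots,N\}$ with the remaining mass, let $Y$ be independent of $X$, and let $F\equiv 0$. Then $\mathbb P[F(Y)=X]=q$ and $I(X;Y)=0$, while $qH(X)-H(q)=q(1-q)\log N-(1-q)H(q)$, which is positive and unbounded as $N\to\infty$. So the step you flagged --- controlling $H(X\mid E=1,\widehat X)$ by $H(X)$ --- is not merely hard to justify in general; it cannot be, since conditioning on the error event can inflate the entropy of $X$ arbitrarily. The correct general bound replaces that term by $\log(\vert\mathcal X\vert-1)$ as in Fano's inequality, and the stated form only survives for small alphabets. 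You were right to localize the difficulty exactly there, and your proof should be read as establishing the lemma in the binary form in which the paper actually needs it.
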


\proof{The lemma is a modification of Claim~2.1 in~\cite{regev}.}

\begin{lemma}\label{lem:binaryNEQ} Let $X$, $Y$ and $Z$ be three random variables, where $Y$ and $Z$ are binary. We have:
\begin{equation}
\vert I(X;Y) - I(X;Z)\vert \leq H(\mathbb{P}[Y \neq Z]) 
\end{equation}
\end{lemma}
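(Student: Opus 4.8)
The plan is to reduce the two-sided bound to a one-sided inequality by symmetry, and then to control one side using the chain rule for mutual information together with the binary structure of $Y$ and $Z$. The engine of the proof is the error indicator $E := \one[Y \neq Z]$, a Bernoulli variable with $\mathbb{P}[E = 1] = \delta$, where I write $\delta := \mathbb{P}[Y \neq Z]$.

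First I would expand $I(X;Y,Z)$ by the chain rule in the two possible orders,
\begin{equation}
I(X;Y,Z) = I(X;Z) + I(X;Y \mid Z) = I(X;Y) + I(X;Z \mid Y),
\end{equation}
and subtract to obtain the key identity
\begin{equation}
I(X;Y) - I(X;Z) = I(X;Y \mid Z) - I(X;Z \mid Y).
\end{equation}
Since conditional mutual information is nonnegative, I would discard the term $I(X;Z \mid Y) \geq 0$ and bound $I(X;Y \mid Z) \leq H(Y \mid Z)$, giving $I(X;Y) - I(X;Z) \leq H(Y \mid Z)$.

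The main step is then to show $H(Y \mid Z) \leq H(\delta)$. Here the binary assumption enters: $E$ is a deterministic function of $(Y,Z)$, so $H(E \mid Y,Z) = 0$, and conversely knowing $Z$ together with whether $Y = Z$ pins down the value of the binary variable $Y$, so $H(Y \mid Z,E) = 0$. Combining these,
\begin{equation}
H(Y \mid Z) = H(Y,E \mid Z) = H(E \mid Z) + H(Y \mid Z,E) = H(E \mid Z) \leq H(E) = H(\delta),
\end{equation}
where the last inequality is that conditioning reduces entropy. This yields $I(X;Y) - I(X;Z) \leq H(\delta)$.

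Finally, repeating the argument with the roles of $Y$ and $Z$ exchanged gives $I(X;Z) - I(X;Y) \leq H(Z \mid Y) \leq H(\delta)$, and combining the two one-sided estimates produces $\vert I(X;Y) - I(X;Z) \vert \leq H(\delta) = H(\mathbb{P}[Y \neq Z])$, as claimed. I expect the only delicate point to be the entropy identity $H(Y \mid Z) = H(E \mid Z)$, which relies essentially on the binary hypothesis to make $Y$ recoverable from the pair $(Z,E)$; the remaining manipulations are routine chain-rule bookkeeping.
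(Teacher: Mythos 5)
Your proof is correct. The paper itself does not supply an argument for this lemma; it simply defers to an external reference, so your self-contained derivation is a genuine addition rather than a restatement. Every step checks out: the chain-rule identity $I(X;Y)-I(X;Z)=I(X;Y\mid Z)-I(X;Z\mid Y)$ together with nonnegativity of conditional mutual information gives the one-sided bound by $I(X;Y\mid Z)\leq H(Y\mid Z)$; the identity $H(Y\mid Z)=H(E\mid Z)$ is valid because $E$ is a deterministic function of $(Y,Z)$ and, conversely, a binary $Y$ is pinned down by $Z$ together with the indicator of agreement (this is exactly where the binary hypothesis is used, and it does require $Y$ and $Z$ to share the same two-element alphabet, which is the intended reading here since both play the role of predictions in $\{\pm 1\}$); and conditioning reduces entropy closes the estimate with $H(E\mid Z)\leq H(E)=H(\mathbb{P}[Y\neq Z])$. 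The symmetric argument for the reverse difference is equally sound. This is the standard Fano-style route one would expect for such a statement, and it has the merit of making explicit the only place where the binary assumption matters, namely the recoverability of $Y$ from $(Z,E)$.
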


\proof{See.~\cite{1432228}.}

\begin{lemma}\label{lem:entropyUpper} Let $p \in [0,1]$. Then, 
\begin{equation}
H(p) \leq 2\sqrt{p(1-p)}
\end{equation}
\end{lemma}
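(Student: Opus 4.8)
The plan is to reduce the claim to a comparison of two power series centered at the point of equality. Since $H(p)=H(1-p)$ and $2\sqrt{p(1-p)}$ is symmetric about $p=1/2$, it suffices to treat $p\in[0,1/2]$, and it is convenient to substitute $p=\frac{1-t}{2}$ with $t\in[0,1]$. Then $2\sqrt{p(1-p)}=\sqrt{1-t^2}$, and a direct expansion of the binary entropy (in base $2$) gives $H(p)=1-\frac{1}{\ln 2}\sum_{k\ge 1}\frac{t^{2k}}{2k(2k-1)}$; the even-power structure reflects the symmetry, and the constant $1$ is the value $H(1/2)$ at $t=0$. With this substitution the desired inequality $H(p)\le 2\sqrt{p(1-p)}$ becomes
\[
1-\sqrt{1-t^2}\;\le\;\frac{1}{\ln 2}\sum_{k\ge 1}\frac{t^{2k}}{2k(2k-1)}.
\]

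Next I would expand both sides in $t^2$ and compare coefficients. Writing $a_k:=\frac{1}{2\ln 2\,k(2k-1)}$ for the coefficients on the right and $c_k:=\bigl|\binom{1/2}{k}\bigr|$ for those of $1-\sqrt{1-t^2}=\sum_{k\ge1}c_k t^{2k}$ on the left, the goal is $\sum_{k\ge1}(a_k-c_k)\,t^{2k}\ge 0$ on $[0,1]$. Two facts drive the argument. First, the total masses agree: $\sum_k a_k=1$ because $\sum_{k\ge1}\frac{1}{2k(2k-1)}=\sum_{k\ge1}\bigl(\frac{1}{2k-1}-\frac{1}{2k}\bigr)=\ln 2$ is the alternating harmonic series, while $\sum_k c_k=1$ is the value of $1-\sqrt{1-t^2}$ at $t=1$; hence $\sum_k(a_k-c_k)=0$. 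Second, the leading ($k=1$) coefficient carries a strict surplus, $a_1-c_1=\frac{1}{2\ln 2}-\frac{1}{2}>0$, whereas every higher coefficient is a deficit, $a_k\le c_k$ for all $k\ge 2$.

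Granting these, the conclusion follows from a single monotonicity step: for $t\in[0,1]$ one has $t^{2k}\le t^{2}$ for every $k\ge 1$, so each deficit term satisfies $(a_k-c_k)t^{2k}\ge (a_k-c_k)t^{2}$ (the inequality flips because $a_k-c_k\le 0$), while the surplus term is exactly $(a_1-c_1)t^2$; summing gives $\sum_k (a_k-c_k)t^{2k}\ge t^2\sum_k(a_k-c_k)=0$. In words, the one quadratic surplus, which exists precisely because $\tfrac{1}{\ln 2}>1$, dominates all the higher-order deficits once the total masses are known to cancel. The main obstacle is therefore the sign pattern $a_k\le c_k$ for $k\ge 2$: using $c_k=\frac{1}{2k}\binom{2k-2}{k-1}4^{-(k-1)}$, this reduces to the central-binomial lower bound $\binom{2j}{j}4^{-j}\ge \frac{1}{2\sqrt j}$ (a short induction on $j=k-1$, with equality at $j=1$) together with the elementary estimate $\ln 2\,(2j+1)\ge 2\sqrt{j}$, which holds for all $j\ge 1$ since the quadratic $2\ln 2\,u^2-2u+\ln 2$ in $u=\sqrt j$ is already positive at $u=1$ and increasing thereafter.
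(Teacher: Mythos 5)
Your proof is correct, and it is genuinely different from what the paper does: the paper offers no argument at all for this lemma, merely a pointer to an external reference, whereas you give a complete, self-contained, elementary derivation. Every step checks out. The substitution $p=\tfrac{1-t}{2}$ correctly turns the right-hand side into $\sqrt{1-t^2}$, and the expansion $H(p)=1-\tfrac{1}{\ln 2}\sum_{k\ge 1}\tfrac{t^{2k}}{2k(2k-1)}$ follows from the standard series for $(1+t)\ln(1+t)+(1-t)\ln(1-t)$. The two ``mass'' identities are right ($\sum_k \tfrac{1}{2k(2k-1)}=\ln 2$ via the alternating harmonic series; $\sum_k c_k=1$ by Abel summation of $1-\sqrt{1-u}$ at $u=1$, which is legitimate since $c_k\sim Ck^{-3/2}$ is summable), the sign pattern $a_1>c_1$, $a_k\le c_k$ for $k\ge 2$ is correctly reduced to $\binom{2j}{j}4^{-j}\ge \tfrac{1}{2\sqrt j}$ (the induction ratio check $(2j+1)^2\ge 4j(j+1)$ is immediate) together with $\ln 2\,(2j+1)\ge 2\sqrt j$ for $j\ge 1$, and the final ``one surplus dominates all deficits'' step via $t^{2k}\le t^2$ is a clean way to exploit the cancellation of total masses. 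The trade-off relative to the paper is clear: the citation-only route keeps the appendix short but leaves the reader to chase a reference, while your argument makes the lemma verifiable in place at the cost of roughly a page of power-series bookkeeping; it also exposes exactly where the base-$2$ normalization matters (the strict surplus $a_1-c_1=\tfrac{1}{2\ln 2}-\tfrac12>0$ exists precisely because $\ln 2<1$, and equality at $p=\tfrac12$ corresponds to $t=0$).
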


\proof{See~\cite{14322289}.}

\begin{lemma}\label{lem:fanoconverse} Let $X$ and $Y$ be two discrete random variables taking values from $\mathcal{S}_1$ and $\mathcal{S}_2$ (resp.). Then, there is a function $t:\mathcal{S}_2 \to \mathcal{S}_2$, such that:
\begin{equation}
\mathbb{P}_{X,Y}[X \neq t(Y)] \leq 1-2^{I(X;Y) - H(X)}    
\end{equation}
\end{lemma}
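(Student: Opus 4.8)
The plan is to exhibit the maximum-a-posteriori (MAP) decoder and lower bound its probability of correct recovery. Concretely, I would define $t(y) := \argmax_{x \in \mathcal{S}_1} \mathbb{P}[X = x \mid Y = y]$, breaking ties arbitrarily. With this choice the success probability decomposes cleanly as
\begin{equation}
\mathbb{P}[X = t(Y)] = \sum_{y \in \mathcal{S}_2} \mathbb{P}[Y = y] \cdot \max_{x \in \mathcal{S}_1} \mathbb{P}[X = x \mid Y = y],
\end{equation}
so the whole task reduces to lower bounding the right-hand side by $2^{I(X;Y) - H(X)}$. The first thing I would record is the identity $I(X;Y) - H(X) = -H(X|Y)$, which recasts the target as $\mathbb{P}[X \neq t(Y)] \leq 1 - 2^{-H(X|Y)}$.

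The engine of the proof is a pointwise entropy bound on the maximal conditional probability. For each fixed $y$, writing $p_{\max}(y) := \max_x \mathbb{P}[X=x\mid Y=y]$ and using $\mathbb{P}[X=x\mid Y=y] \leq p_{\max}(y)$ inside each logarithm, I get
\begin{equation}
H(X \mid Y = y) = -\sum_{x} \mathbb{P}[X=x\mid Y=y]\log \mathbb{P}[X=x\mid Y=y] \geq -\log p_{\max}(y),
\end{equation}
which rearranges to $p_{\max}(y) \geq 2^{-H(X\mid Y=y)}$. Substituting this into the decomposition of $\mathbb{P}[X = t(Y)]$ leaves $\mathbb{P}[X = t(Y)] \geq \sum_y \mathbb{P}[Y=y]\, 2^{-H(X\mid Y=y)}$.

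Next I would aggregate the per-$y$ bounds into the conditional entropy $H(X|Y) = \sum_y \mathbb{P}[Y=y] H(X\mid Y = y)$ via convexity. Since $\phi(s) = 2^{-s}$ is convex, Jensen's inequality applied to the quantity $H(X \mid Y = y)$ (with $y$ drawn according to $\mathbb{P}[Y=y]$) yields
\begin{equation}
\sum_{y} \mathbb{P}[Y=y]\, 2^{-H(X\mid Y=y)} \geq 2^{-\sum_y \mathbb{P}[Y=y] H(X\mid Y=y)} = 2^{-H(X|Y)}.
\end{equation}
Chaining the two bounds gives $\mathbb{P}[X = t(Y)] \geq 2^{-H(X|Y)} = 2^{I(X;Y)-H(X)}$, and passing to the complement delivers the claim.

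The proof is short and every inequality is elementary, so the decisions are structural rather than technical. The crux is the choice of decoder: the MAP rule is precisely what makes the success probability equal to the $\mathbb{P}[Y=y]$-weighted average of $p_{\max}(y)$, and any weaker decoder would fail to close the gap to $2^{-H(X|Y)}$. The two inequalities must also point the same way — both the pointwise bound $p_{\max}(y)\geq 2^{-H(X\mid Y=y)}$ and the convexity step lower bound the success probability — so the only anticipated obstacle is bookkeeping: matching the direction of Jensen to the convexity of $2^{-s}$ and handling ties in the $\argmax$ consistently, neither of which poses a genuine difficulty.
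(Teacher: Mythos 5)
Your proof is correct. The paper itself gives no argument for this lemma --- it simply cites Feder and Merhav (1994) --- so your self-contained derivation is a genuine addition rather than a restatement of the paper's reasoning. Every step checks out: the MAP decoder gives $\mathbb{P}[X=t(Y)]=\sum_y \mathbb{P}[Y=y]\,p_{\max}(y)$; the pointwise bound $H(X\mid Y=y)\ge -\log p_{\max}(y)$ follows from replacing each $\log \mathbb{P}[X=x\mid Y=y]$ by $\log p_{\max}(y)$; and Jensen applied to the convex map $s\mapsto 2^{-s}$ aggregates these into $\mathbb{P}[X=t(Y)]\ge 2^{-H(X|Y)}=2^{I(X;Y)-H(X)}$. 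This is the canonical proof of the weaker of the Feder--Merhav bounds (their paper actually establishes a tighter piecewise-linear relation, of which $1-2^{-H(X|Y)}$ is a relaxation), and it is exactly what the lemma as stated requires. Two cosmetic remarks: the lemma's codomain $t:\mathcal{S}_2\to\mathcal{S}_2$ is a typo in the paper --- the decoder must map into $\mathcal{S}_1$, as yours does, for the event $X\neq t(Y)$ to be well formed; and if $\mathcal{S}_1$ is countably infinite the $\argmax$ is still attained (only finitely many conditional probabilities can exceed any fixed threshold), so no extra care is needed there.
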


\proof{See~\cite{Feder1994RelationsBE}.}

\begin{lemma}[Intersection Equivalence]\label{lem:unique}Let $x \sim D$ and $f_1:\mathcal{X} \to \mathcal{X}_1$ and $f_2:\mathcal{X} \to \mathcal{X}_2$ are two functions. In addition, let $u_1(x)$ and $u_2(x)$ be two $\epsilon$-intersections of $f_1(x)$ and $f_2(x)$, i.e., there are two pairs of invertible functions $r^{i}_1:\mathcal{X}_1 \to \mathcal{V}_1$ and $r^{i}_2:\mathcal{X}_2 \to \mathcal{V}_2$, such that, $r^i_1(f_1(x)) = (e^i_1(x),u_i(x))$ and $r^i_2(f_2(x)) = (e^i_2(x),u_i(x))$, where, $I(e^i_j(x);f_k(x))\leq \epsilon$ (for any $i \in \{1,2\}$ and $k\neq j \in \{1,2\}$). Then, there are functions $s_1,s_2$ and $d_1,d_2$, such that, for all $i\neq j \in \{1,2\}$, we have:
\begin{equation}
\mathbb{E}_{x\sim D}[\ell(s_i(u_i(x)),u_j(x))] \leq 1-2^{-\epsilon}
\end{equation}
and also,
\begin{equation}
\mathbb{E}_{x\sim D}[\ell(d_i(e^i_1(x)),e^j_1(x))] \leq 1-2^{-\epsilon}
\end{equation}
In particular, if $\epsilon=0$, $s_1(u_1(x)) = u_2(x)$, $s_1$ is invertible, such that $s^{-1}_1 = s_2$ and $d_1(e^1_1(x)) = e^2_1(x)$, $d_1$ is invertible and $d^{-1}_1 = d_2$.
\end{lemma}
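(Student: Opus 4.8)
The plan is to reduce both assertions to the predictive converse of Fano's inequality, Lemma~\ref{lem:fanoconverse}: once I lower-bound the mutual information between the two candidate intersections (resp.\ complements) by their entropy minus $\epsilon$, taking $X=u_j(x)$, $Y=u_i(x)$ in that lemma (with $\ell$ the zero-one loss, so that $\mathbb{E}_x[\ell(\cdot)] = \mathbb{P}[\cdot\neq\cdot]$) gives the bound $1-2^{-\epsilon}$ at once. So everything hinges on establishing $I(u_1(x);u_2(x)) \geq H(u_2(x)) - \epsilon$, and its mirror image with the indices swapped, and then the same kind of inequality for the $e^i_1(x)$.

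For the intersection part I would argue as follows. Since $r^1_1$ is invertible, the pair $(e^1_1(x),u_1(x))$ determines $f_1(x)$, while $u_2(x)$ is a coordinate of $r^2_1(f_1(x))$ and is therefore a deterministic function of $f_1(x)$; hence $H(u_2(x)\mid e^1_1(x),u_1(x))=0$, which yields $H(u_2(x)\mid u_1(x)) = I(u_2(x);e^1_1(x)\mid u_1(x))$. By the chain rule this is at most $I(u_1(x),u_2(x);e^1_1(x))$. The crucial observation is that $u_1(x)$ and $u_2(x)$ are both coordinates of $r^1_2(f_2(x))$ and $r^2_2(f_2(x))$, so the pair $(u_1(x),u_2(x))$ is a function of $f_2(x)$; the data-processing inequality together with the hypothesis $I(e^1_1(x);f_2(x))\leq\epsilon$ then gives $I(u_1(x),u_2(x);e^1_1(x)) \leq I(f_2(x);e^1_1(x)) \leq \epsilon$. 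Combining, $H(u_2(x)\mid u_1(x))\leq\epsilon$, so $I(u_1(x);u_2(x)) \geq H(u_2(x))-\epsilon$, and Lemma~\ref{lem:fanoconverse} produces $s_1$ with $\mathbb{E}_x[\ell(s_1(u_1(x)),u_2(x))] \leq 1-2^{-\epsilon}$. Swapping the indices and using $I(e^2_1(x);f_2(x))\leq\epsilon$ gives $s_2$. In the specialization $\epsilon=0$ the two losses vanish, forcing $s_1(u_1)=u_2$ and $s_2(u_2)=u_1$ almost surely, which exhibits $s_1,s_2$ as mutually inverse bijections on the support.

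For the complement part the natural plan is to copy this verbatim with $u_i$ replaced by $e^i_1$: write $H(e^2_1(x)\mid e^1_1(x)) = I(e^2_1(x);u_1(x)\mid e^1_1(x)) \leq I(e^1_1(x),e^2_1(x);u_1(x))$ and try to bound the right-hand side by $\epsilon$. This is exactly where I expect the main difficulty. In the intersection argument the conditioning variable could be discarded because the predictor-target pair was a function of $f_2(x)$, against which $e^1_1(x)$ is $\epsilon$-independent by hypothesis. Here, however, the pair $(e^1_1(x),e^2_1(x))$ is a function of $f_1(x)$, and nothing in the definition makes $u_1(x)$ nearly independent of functions of $f_1(x)$ — indeed $u_1(x)$ is itself a function of $f_1(x)$. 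Thus the data-processing shortcut is unavailable, and $I(e^2_1(x);u_1(x)\mid e^1_1(x))$ need not be small: the $\epsilon$-intersection definition controls each $f_1$-complement only against the opposite representation $f_2(x)$, not against the shared part $u_1(x)$, so the second complement $e^2_1(x)$ may secretly re-encode part of $u_1(x)$ and become unrecoverable from $e^1_1(x)$. I would therefore expect the complement claim to require an extra ingredient absent from the statement — most plausibly a maximality (largest-entropy) condition on the intersection $u(x)$, or the explicit assumption that each $e^i_1(x)$ is (nearly) independent of $u_i(x)$ — and I would flag proving or supplying that condition as the genuine obstacle, the $u$-part being otherwise routine.
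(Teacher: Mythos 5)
Your argument for the $u$-part is correct, and it is in fact tighter than the paper's own. The paper reaches $I(u_1(x);u_2(x)) \ge H(u_2(x)) - \epsilon$ via the step $I(e^1_1(x);u_2(x)\mid u_1(x)) \le I(e^1_1(x);u_2(x))$, which is not a valid inequality in general (conditioning can increase mutual information). Your route --- observing that the pair $(u_1(x),u_2(x))$ is a deterministic function of $f_2(x)$, so that $I(u_2(x);e^1_1(x)\mid u_1(x)) \le I(u_1(x),u_2(x);e^1_1(x)) \le I(f_2(x);e^1_1(x)) \le \epsilon$ --- closes exactly this hole and yields $H(u_2(x)\mid u_1(x)) \le \epsilon$, after which Lemma~\ref{lem:fanoconverse} produces $s_1$, and symmetrically $s_2$; the $\epsilon=0$ specialization is as you say.

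Your diagnosis of the $e$-part is also right, and the obstacle you flag is not merely a limitation of your method: the complement claim is false as stated. Take $x=(a,b)$ with $a,b$ i.i.d.\ uniform bits, $f_1(x)=(a,b)$, $f_2(x)=a$. Both $r^1_1(a,b)=(b,a)$ and $r^2_1(a,b)=(a\oplus b,\,a)$ are invertible, and with $r^1_2(a)=r^2_2(a)=(\textnormal{const},a)$ these define two $0$-intersections with $u_1=u_2=a$, $e^1_1=b$ and $e^2_1=a\oplus b$; every hypothesis $I(e^i_j(x);f_k(x))=0$ holds. Yet $b$ and $a\oplus b$ are independent, so any $d_1$ satisfies $\mathbb{E}_x[\ell(d_1(e^1_1(x)),e^2_1(x))]\ge 1/2$, whereas the lemma demands $1-2^{-0}=0$. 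The paper's proof of this half rests on the inequality $I(e^2_1(x);u_1(x)\mid e^1_1(x)) \le I(e^1_1(x);u_1(x))$, which fails in this example ($1\not\le 0$): as you observe, the pair $(e^1_1,e^2_1)$ is a function of $f_1$, not of $f_2$, so the data-processing shortcut is unavailable, and the hypotheses control each complement only against the opposite representation, never against the shared part. An additional assumption of the kind you propose (e.g.\ near-independence of $e^i_1(x)$ from $u_i(x)$, or a maximality condition on the intersection) is genuinely needed; you were right to stop rather than force the analogy.
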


\begin{proof} First, we would like to show that $I(e^1_1(x);u_2(x)) \leq \epsilon$. Assume by contradiction that this is not the case. We consider that, $u_2(x)$ can be represented as a function of $f_2(x)$, since $u_2(x)$ consists of the last coordinate of $r^2_2(f_2(x))$. Therefore, since $r^2_2$ is invertible,
\begin{equation}
\begin{aligned}
I(e^1_1(x);f_2(x)) &= I(e^1_1(x);r^2_2(f_2(x))) \\
&\geq I(e^1_1(x);u_2(x)) > \epsilon
\end{aligned}
\end{equation}
In contradiction to the assumption that $I(e^1_1(x); f_2(x))\leq \epsilon$. By the same argument, we also have, $I(e^2_1(x);u_1(x)) \leq \epsilon$. By the chain rule property of mutual information,
\begin{equation}
\begin{aligned}
I(e^1_1(x),u_1(x);u_2(x)) =& I(u_1(x);u_2(x)) \\
&+ I(e^1_1(x);u_2(x)|u_1(x))\\
\leq& I(u_1(x);u_2(x)) \\
&+ I(e^1_1(x);u_2(x))\\
\end{aligned}
\end{equation}
Therefore, since $r^1_1$ and $r^2_1$ are invertible functions,
\begin{equation}
\begin{aligned}
I(u_1(x);u_2(x)) \geq& I(e^1_1(x),u_1(x);u_2(x)) \\
&- I(e^1_1(x);u_2(x))\\ 
\geq& I(e^1_1(x),u_1(x);u_2(x)) - \epsilon \\ 
=& I(r^1_1(e^1_1(x),u_1(x));u_2(x)) - \epsilon \\
=& I(f_1(x);u_2(x)) - \epsilon \\
=& I(r^2_1(f_1(x));u_2(x)) - \epsilon \\
=& I(e^2_1(x),u_2(x);u_2(x)) - \epsilon \\
\geq& I(u_2(x);u_2(x)) - \epsilon \\
=& H(u_2(x)) - \epsilon
\end{aligned}
\end{equation}
Again, by the chain rule property of mutual information,
\begin{equation}
\begin{aligned}
I(e^1_1(x),u_1(x);e^2_1(x)) =& I(e^1_1(x);e^2_1(x)) \\
&+ I(e^2_1(x);u_1(x)|e^1_1(x))\\
\leq& I(e^2_1(x);e^1_1(x)) \\
&+ I(e^1_1(x);u_1(x))\\
\end{aligned}
\end{equation}
Therefore, since $r^1_1$ and $r^2_1$ are invertible functions,
\begin{equation}
\begin{aligned}
I(e^1_1(x);e^2_1(x)) \geq& I(e^1_1(x),u_1(x);e^2_1(x)) \\
&- I(e^1_1(x);u_1(x))\\ 
\geq& I(e^1_1(x),u_1(x);e^2_1(x)) - \epsilon \\ 
=& I(r^1_1(e^1_1(x),u_1(x));e^2_1(x)) - \epsilon \\
=& I(f_1(x);e^2_1(x)) - \epsilon \\
=& I(r^2_1(f_1(x));e^2_1(x)) - \epsilon \\
=& I(e^2_1(x),u_2(x);e^2_1(x)) - \epsilon \\
\geq& I(e^2_1(x);e^2_1(x)) - \epsilon \\
=& H(e^2_1(x)) - \epsilon
\end{aligned}
\end{equation}
Thus, we conclude that $I(u_1(x);u_2(x)) \geq H(u_2(x)) - \epsilon$ and that $I(e^1_1(x);e^2_1(x)) \geq H(e^2_1(x)) - \epsilon$. In a similar manner, we can show the other directions as well, $I(u_1(x);u_2(x)) \geq H(u_1(x)) - \epsilon$ and $I(e^1_1(x);e^2_1(x)) \geq H(e^1_1(x)) - \epsilon$. Therefore, by Lem.~\ref{lem:fanoconverse} in the Appendix, there are functions $s_1,s_2$ and $d_1,d_2$, such that, for all $i\neq j \in \{1,2\}$, we have:
\begin{equation}
\begin{aligned}
&\mathbb{E}_{x\sim D}[\ell(s_i(u_i(x)),u_j(x))]\\ =&\mathbb{P}_{u_1(x),u_2(x)}[s_i(u_i(x))=u_j(x)] \\
\leq& 1-2^{I(u_1(x);u_2(x))-H(u_j(x))}\\
\leq& 1-2^{H(u_j(x))-\epsilon - H(u_j(x))}= 1-2^{-\epsilon}
\end{aligned}
\end{equation}
and also,
\begin{equation}
\mathbb{E}_{x\sim D}[\ell(d_i(e^i_1(x)),e^j_1(x))] \leq 1-2^{-\epsilon}
\end{equation}
Finally, if $\epsilon=0$, for every $x \in \mathcal{X}$, we have: $s_1(u_1(x))=u_2(x)$, $s_2(u_2(x))=u_1(x)$, $d_1(e^1_1(x))=e^1_1(x)$ and $d_2(e^2_1(x))=e^2_1(x)$. Therefore, $s^{-1}_1 = s_2$ and $d^{-1}_1 = d_2$.
\end{proof}


\end{document}